\numberwithin{equation}{section}
\newtheorem{Theorem}{Theorem}[section]
\newtheorem{Lemma}[Theorem]{Lemma}
\newtheorem{Proposition}[Theorem]{Proposition}
\newtheorem{Assumption}{H.\!\!}
\theoremstyle{definition}
\newtheorem{Example}{Example}[section]
\theoremstyle{remark}
\newtheorem{Remark}{Remark}[section]
\def\to{\rightarrow}
\def\cA{\mathcal{A}}
\def\cE{\mathcal{E}}
\def\cF{\mathcal{F}}
\def\cH{\mathcal{H}}
\def\cN{\mathcal{N}}
\def\cO{\mathcal{O}}
\def\cP{\mathcal{P}}
\def\cX{\mathcal{X}}
\def\cY{\mathcal{Y}}
\def\d{{\mathrm{d}}}
\def\sE{{\mathbb{E}}}
\def\sF{{\mathbb{F}}}
\def\sH{{\mathbb{H}}}
\def\sN{{\mathbb{N}}}
\def\sP{\mathbb{P}}
\def\sR{{\mathbb R}}
\def\sS{{\mathbb{S}}}
\newcommand{\tr}{\textnormal{tr}}
\newcommand{\op}{\textnormal{op}}
\DeclareMathOperator*{\argmin}{arg\,min}
\newcommand{\lc}
{\mathrel{\raise2pt\hbox{${\mathop<\limits_{\raise1pt\hbox
{\mbox{$\sim$}}}}$}}}
\newcommand{\gc}
{\mathrel{\raise2pt\hbox{${\mathop>\limits_{\raise1pt\hbox{\mbox{$\sim$}}}}$}}}
\newcommand{\ec}
{\mathrel{\raise2pt\hbox{${\mathop=\limits_{\raise1pt\hbox{\mbox{$\sim$}}}}$}}}
\def\bb{\begin{equation}} \def\ee{\end{equation}}
\def\bbn{\begin{equation*}} \def\een{\end{equation*}}
\def\beqn{\begin{eqnarray}}  \def\eqn{\end{eqnarray}}
\def\beqnx{\begin{eqnarray*}} \def\eqnx{\end{eqnarray*}}
\def\bn{\begin{enumerate}} \def\en{\end{enumerate}}
\def\bd{\begin{description}} \def\ed{\end{description}}
\title{$\epsilon$-Policy Gradient for Online Pricing  
}
\author{Lukasz Szpruch$^{1,2}$}
\address{$^1${School of Mathematics, University of Edinburgh}}
\address{$^2${Alan Turing Institute}}
\email{L.Szpruch@ed.ac.uk}
\author{Tanut Treetanthiploet$^{3,4}$}
\address{$^3${The Institute for Fundamental Study, Naresuan University}}
\address{$^4${Quantum Technology Foundation (Thailand)}}
\email{ttreetanthiploet@gmail.com}
\author{Yufei Zhang$^{5}$}
\address{$^5${Department of Mathematics, Imperial College London, London, UK}}
\email{yufei.zhang@imperial.ac.uk}
\subjclass[2020]{
62J12,  
68Q32,  
65Y20 
}
\keywords{
Online pricing, parametric inference, policy gradient, regret analysis, parametric contextual  bandit,
generalized linear model}
\date{}
\begin{document}

\maketitle

\begin{abstract}
  Combining model-based and model-free reinforcement learning  approaches,  this paper proposes  and analyzes
  an $\epsilon$-policy gradient 
  algorithm  for the online pricing learning task. The algorithm extends $\epsilon$-greedy algorithm by replacing greedy exploitation with gradient descent step and facilitates learning via model inference.
  We optimize the regret of the proposed algorithm by quantifying the exploration cost in terms of the exploration probability $\epsilon$ and the exploitation cost in terms of the gradient descent optimization and gradient estimation errors. The algorithm achieves an expected regret of order $\mathcal{O}(\sqrt{T})$ (up to a logarithmic factor) over $T$ trials.

\end{abstract}

 
\section{Problem formulation}
\label{sec:intro}

Model-based and model-free learning are two prominent approaches in the field of reinforcement learning (RL), each with its own set of advantages and disadvantages. Model-based are sample efficient, requiring fewer interactions with the environment thus obviating the need for costly Monte Carlo rollouts \cite{sutton2018reinforcement}. Furthermore, a model can be used to generalize across various but related tasks, enhancing its applicability in diverse contexts. Importantly the use of a model enhances decisions interpretability and auditability. Despite these benefits, model-based learning is not without drawbacks. Model misspecification can lead to poor performance and the complexity and computational cost involved in developing and maintaining an accurate model might be significant. 
  Conversely, model-free approaches are generally simpler and scale effectively, as they learn policies directly, often through the use of efficient gradient based algorithms, bypassing the burden associated with environmental modeling. However, model-free learning require extensive interaction with the environment, which can be both costly and time-consuming. Additionally, the lack of a model means that when the objective or environment changes, one often need to start learning task from scratch.  In this paper we are interested in the following natural  question:

\vspace{0.3cm}
\emph{Can one integrate  model-based and model-free methods to provably achieve the benefits of both?} 
\vspace{0.3cm}

In this paper, we answer this question affirmatively within the context of online pricing problems by developing $\epsilon$-policy gradient ($\epsilon$-PG) algorithm which extends $\epsilon$-greedy algorithm by replacing greedy exploitation with gradient descent step and facilitates learning via model inference.

\subsection*{Motivating example}

To motivate
 our learning problem,
 let us consider the following dynamic pricing problem commonly encountered by many online retailers.
In this problem, 
    customers arrive sequentially to purchase a product offered by the agent, with each customer belonging to a specific segment. For each customer, the agent selects a price from the set of all admissible prices and observes  whether the offered price is accepted by the customer.
    The customer's response  is random and follows from   an \emph{unknown} 
 distribution   depending  on the customer segment  and the quoted price. 
If the customer accepts the offered price, the agent receives a reward based on the customer segment and the offered price; otherwise, the agent's reward is zero.
The agent's objective is to learn a pricing policy that maps customer segments to optimal quoted prices while simultaneously maximizing cumulative reward.

This example highlights several common features in online pricing problems: (1) The agent's reward for each trial depends on the agent's action,  and the customer's segment and response. (2) The customer's responses for a given quotation are  unknown and constitute the main source of randomness in the realized reward. (3) If the offered price is accepted by a customer, the precise dependence of the realized reward on the offered price and customer segment is typically a known function to the agent, as it depends on the associated cost of delivering the sold product and the strategic plan of the company. These  stylized features are the basis of our mathematical formulation of a learning framework, which we describe next.

\subsection*{Problem formulation}
Let $(\cX,\Sigma_{\cX})$ be a measurable space  representing  the feature space, let 
$(\cA,\Sigma_{\cA})$ be a measurable space  representing  the agent's admissible actions, and 
let 
 $(\cY, \Sigma_\cY)$ be a measurable space 
consisting of  all potential  responses.
At each time step $t\in \mathbb{N}$, a feature $x_t \in \cX$ is sampled from an unknown distribution $\mu\in \cP(\cX)$ and revealed to the agent. The agent chooses an action $a_t\in \cA$ based on the feature $x_t$ and all historical observations. The agent observes the corresponding response $y_t\in \cY$ to her action, which follows from some unknown conditional distribution $\nu (\d y|x_t,a_t)\in \cP( \cY)$.
The resulting instantaneous reward 
is given by $r(x_t, a_t, y_t)$
for a known function $ r:\cX\times\cA\times \cY\to \sR$. 
Precise knowledge of   $r$, $\nu$ and $\mu$ available to the agent  will be 
  given in Section \ref{sec:main_result}.

To measure the performance of the agent's actions, 
we define the expected reward $\bar r:\cX\times \cA\to \sR$ for a given feature $x\in \cX$ and action $a\in \cA$:  
\begin{equation}
\label{eq:expected_reward}
\bar r  ( x, a) \coloneqq \int_{\cY} r\big( a, x, y \big) \nu \big( \d y|x, a  \big),
\end{equation}
where we average the realized  reward with respect to the randomness in the response distribution.
For each $T\in \sN$,   we define the  regret  of the agent's  actions $(a_t)_{t=1}^T$ by:
\begin{equation}
    \label{eq:regret}
    \textrm{Reg}\left((a_t)_{t=1}^T\right) \coloneqq \sum_{t=1}^T \Big(  \sup_{a \in \cA}\bar{r} ( x_t, a) -\bar{r} ( x_t, a_t) \Big),
\end{equation}
which compares the expected reward of   action $a_t$ against 
the optimal reward 
$\sup_{a \in \cA}\bar{r} ( x_t, a)$  for 
the context $x_t$ at  each time point $t\in \sN$.
 The regret $  \textrm{Reg}(\cdot)$ characterizes
the cumulative loss from taking sub-optimal actions up to the $T$-th trial.
Agent's aim is to construct
actions whose regret grows sublinearly with respect to $T$.

\subsection*{Our work}

This paper   proposes  and analyzes
  an $\epsilon$-policy gradient 
  ($\epsilon$-PG)
  algorithm  for  the above learning task 
  by integrating model-based and model-free RL approaches. 
The central component of the algorithm is a PG method, which  updates 
 the  pricing policy   $\phi: \mathcal{X} \to \mathcal{A}$  
 using  the  gradient  of the expected reward \eqref{eq:expected_reward} at the current policy $\phi$. 
The  unknown policy gradient $\nabla_a \bar r$ is not evaluated using  standard  black-box  Monte-Carlo gradient estimation techniques (e.g., REINFORCE method \cite{williams1992simple}), which often suffer from 
slow convergence and hence are prohibitively costly for many pricing problems.
 Instead, 
  our algorithm leverages a model-based approach to enhance sample efficiency in the gradient evaluation and to avoid cold-start issues. 

Specifically, we assume the  distribution $\nu(dy|x,a)$ 
of the response variable follows  a parametric form   $\pi_{\theta^\star}(dy|x,a)$  with an unknown parameter $\theta^\star$. 
After each trial $t = 1, 2,\ldots$, we estimate $\theta^\star$ by solving an empirical risk minimisation problem using historical observations. 
The policy is then updated  based on the estimated gradient $\nabla_a \bar r_{\theta_t}$, where $\bar r_{\theta_t}$ denotes the reward function \eqref{eq:expected_reward} with the estimated response distribution $\pi_{\theta_t}$, using policy gradient with a learning rate $\eta_t$ of the form
\begin{equation*}
     \phi_{t}=\phi_{t-1}+\eta_t (\nabla_a \bar r_{\theta_{t}})(\cdot,\phi_{t-1}(\cdot)) \quad t = 1, 2,\ldots\,.
\end{equation*}
Note that for a given parameter $\theta_t$, the gradient $\nabla_a \bar r_{\theta_t}$ can be computed with no variance.
 To ensure the asymptotic consistency of the gradient evaluation, 
 an exploration strategy is exercised with 
 probability $\epsilon$ to   explore the parameter space,
while the exploration probability $\epsilon$ is   reduced at a suitable rate as the learning proceeds.

 We optimize the regret of the proposed $\epsilon$-PG algorithm by quantifying the exploration and exploitation costs. Our analysis accommodates a general parametric model $\pi_\theta$ of $\nu$ and identifies the intricate interplay between the   loss function in the empirical risk minimization    and the structure of $\pi_\theta$. 
 This subsequently facilitates   quantifying the exploration cost in terms of the exploration probability $\epsilon$ used in the $\epsilon$-PG strategy.
 The exploitation cost is quantified  in terms of the optimization error resulting from gradient descent and the gradient estimation error resulting from   the inexact response probability. By optimizing the exploration probability and learning rate in the policy gradient update, our algorithm achieves an expected regret of order $\mathcal{O}(\sqrt{T})$ (up to a logarithmic factor) over $T$ trials.

{
\subsection*{Our approaches and most related works}

To the best of our knowledge, this is the first   theoretical work on 
 regret bounds of a policy gradient algorithm for online pricing problems with general feature and action spaces.  
 In the sequel, we will compare the proposed algorithm with existing algorithms for two closely related learning tasks: contextual bandits and online   optimizations.

 \subsubsection*{Contextual bandits algorithms}
 

  Most contextual bandit algorithms iteratively estimate the expected reward $\bar r$ (without using the structure \eqref{eq:expected_reward}) based on realized rewards from previous trials and exercise the greedy policy which maximizes the estimated expected rewards over all actions (see, e.g., \cite{filippi2010parametric, agrawal2013thompson, chowdhury2017kernelized, hao2020adaptive, vakili2021information, janz2023exploration} and references therein). However, there are three drawbacks that make these algorithms unsuitable for the considered online pricing problems: (1) Computing the greedy policy at each trial can be expensive, especially for large action space $\cA$. (2) The algorithm's performance relies on the efficiency of the estimation oracle for the expected reward $\bar r$. Such sample efficiency is typically quantified under the condition that $r$ satisfies a linear model \cite{agrawal2013thompson, hao2020adaptive}, a generalized linear model  \cite{filippi2010parametric, janz2023exploration}, or a Gaussian process model \cite{chowdhury2017kernelized, vakili2021information}. Unfortunately, these structural assumptions of $r$ may not hold for practical online pricing problems due to nonlinearity of  $\bar r$; see \cite{cohen2021generalised} for a concrete example where   generalized linear models  fail to represent the expected reward due to the lack of monotonicity of $\bar r$. (3) The algorithm is not sample efficient with changing objectives. Indeed, when the instantaneous reward $r$ changes, e.g., due the change of the cost for  delivering  the product,  the expected reward $\bar r$ changes as well. Consequently, one often needs to restart the learning task from scratch.

Our work tackles the aforementioned drawbacks by utilizing the decomposition \eqref{eq:expected_reward} of the expected reward and directly learning the response distribution, a methodology more aligned with industrial practices \cite{blier2020machine}. This approach avoids potential model misspecification in reward estimation and allows for imposing explicit and interpretable models of the response distribution, thereby facilitating  the development of effective learning algorithms. Additionally, it ensures the algorithm adapts quickly to changes in the instantaneous reward $r$, as we can utilize the previously learned response distribution to initialize the learning task. Furthermore, we enhance the algorithm's efficiency by replacing greedy exploitation with policy gradient exploitation, which is more efficient, particularly for continuous action spaces.

 \subsubsection*{Online  optimization  algorithms}

   The proposed $\epsilon$-PG algorithm can be viewed as an online gradient descent approach for maximizing the unknown expected reward over an infinite-dimensional policy space. Due to this infinite-dimensional nature, existing convergence analysis for online gradient descent in finite-dimensional problems (see e.g.~\cite{hazan2016introduction}) is not suitable for the $\epsilon$-PG algorithm. Furthermore, the $\epsilon$-PG algorithm updates using biased gradients derived from an inexact response distribution. This introduces technical  challenges in the regret analysis of $\epsilon$-PG. Unlike existing online gradient descent algorithms that work with unbiased gradients \cite{hazan2016introduction} or biased gradients whose biases diminish at prescribed rates \cite{ajalloeian2020convergence}, our setting requires agents to actively manage the magnitude of gradient evaluation errors through strategic exploration. Consequently, optimizing the algorithm's regret involves a critical balance between exploration and exploitation.

\subsection*{Notation}
For a measurable space $\cX$, we denote by $\cP(\cX)$ the space of probability measures on $\cX$.
For each $d\in \sN$,   we denote by $I_d$ the $d\times d$  identity matrix, by $\sS^d$, ${\sS}^d_{\ge 0}$
and  $\sS^d_+$ 
 the space of $d\times d$  symmetric, symmetric positive semidefinite, and symmetric positive definite 
 matrices, respectively, and by 
$\rho_{\max}(A)$
 and $\rho_{\min}(A)$
 the largest and smallest eigenvalues of $A\in \sS^d$.  We equip $\sS^d$ with the Loewner (partial) order such that for each $A,B\in \sS^d$, $A\succeq B$  if $A-B\in   {\sS}^d_{\ge 0}$. 
 For a given $A\in \sR^{d\times d}$, 
we denote by $\|A\|_{\op}$ its spectral norm   or equivalently   the operator norm induced by  Euclidean norms.

\section{Main results} 
 \label{sec:main_result}
 
 This section introduces the precise assumptions of the model coefficients,
 derives the policy gradient algorithm and analyzes its regret. 
To facilitate the analysis, we assume without loss of generality that all random variables are supported on a probability space $(\Omega, \sF,\sP)$, which can be defined as the product space of probability spaces associated with each observation.

 \subsection{Estimation of response distribution}

This section studies the estimation of the response distribution, which is crucial for evaluating the policy gradient. We assume a generic parametric model $\pi_{\theta^\star}$ for $\nu  $ and identify the essential structure of $\pi_{\theta^\star}$ that facilitates efficient learning of $\theta^\star$ using an empirical risk minimization. Specifically, we impose the following model assumption  for $\nu$.

\begin{Assumption}
\label{assum:pi_theta}
There exists 
a  known function $\pi:\sR^d\times  \cX\times \cA\to \cP( \cY)$
such as 
 $\nu(\d y|x,a) = \pi_{\theta^\star}(\d y|x,a)$
for some unknown $\theta^\star \in \sR^d$,
and $y\mapsto r(x,a,y)$ is integrable under the measure $\pi_\theta(\d y|x,a)$
for all $(\theta, x,a)\in \sR^d\times \cX\times \cA$.

\end{Assumption}

We further assume for simplicity  that  the agent estimates $\theta^\star$ 
through an empirical risk minimization.  
Specifically,
after the $T$-th trials,
given the observations  $(x_t,a_t,y_t)_{t=1}^T$,
 the agent  estimates   $\theta^\star$ by   solving an empirical risk minimisation problem:
\begin{align}
\label{eq:ls}
{\theta}_T \coloneqq \argmin_{\theta\in \sR^d} \left( \sum_{t=1}^T \ell ({\theta}, x_t, a_t, y_t  ) +\frac{1}{2} \|\theta \|^2_{\sR^d}\right),
\end{align}
where   $\ell: \sR^d\times  \cX\times \cA\times \cY\to \sR$ 
is a suitable loss function satisfying the following conditions.

\begin{Assumption}
\label{assum:loss}
$\ell$ is compatible with  $\pi_\theta$ in (H.\ref{assum:pi_theta}) in the sense that, 
there exists   $H : \cX \times \cA \to \sS^{d}_{\ge 0}$, called an information matrix,
   such that for all $(x,a, y ) \in    \cX \times \cA\times \cY$,  
 \begin{enumerate}[(1)]
 \item \label{item: convex} 
  $\ell(\cdot, x,a,y) $  is twice differentiable   and satisfies   $  \nabla^2_\theta \ell(\theta, x,a,y)  \succeq H(x,a) $ for all $\theta\in \sR^d$; 
  \item
    \label{item: compatibility}   
  $ 
           \int_\cY \exp \left( \lambda^\top \nabla_\theta  \ell (\theta^\star, x,a, y ) \right) \pi_{\theta^\star } (\d y | x,a) \le \exp \big(\lambda^\top H(x,a) \lambda \big)
$      for all   $\lambda \in \sR^{d}$,
where $\theta^\star$ is the (unknown) true parameter in (H.\ref{assum:pi_theta}).
 \end{enumerate}
\end{Assumption}

Assumption (H.\ref{assum:loss}) highlights  the essential properties of the loss function $\ell$  
that allow for 
quantifying the error ${\theta}_T-\theta^\star$ in high probability. 
These conditions generalize the results from  the special case  where $\ell$ represents the logarithm of $\pi_\theta$'s density and  the estimator \eqref{eq:ls} corresponds to     the (regularized) maximum likelihood estimator. 
 In this context, it is known that the Hessian $\nabla^2_\theta \ell$
 is the   (observed) Fisher information,
 and the gradient $\nabla_\theta \ell (\theta^\star,\cdot )$
 asymptotically     Gaussian distributed     with  a  variance being the Fisher information (see e.g., \cite[Section 8.12]{malinvaud1970statistical}). 
Here we relax the conditions by assuming that
the gradient  $\nabla_\theta \ell (\theta^\star, x,a, \cdot )$ 
is   sub-Gaussian distributed, and its tail behavior can be quantified by a lower bound 
 $H(x,a)$ 
of 
  the Hessian matrix $\nabla^2_\theta \ell$.

Here we give a concrete   example of  loss functions satisfying   (H.\ref{assum:loss})
 where $\pi_\theta$ is given by  a (feature-dependent) generalized linear model.

 \begin{Example} 
\label{example:negative log loss}
Suppose that 
 $\pi:\sR^d\times  \cX\times \cA\to \cP( \cY)$ in (H.\ref{assum:pi_theta}) is of the form
\begin{equation}
\label{eq:glm}
 \pi_{\theta}(\d y|x,a) = g(x,y) \exp\left( h(x,y)^\top  \psi(x,a) {\theta} - b\big(x,  \psi(x,a) {\theta} \big) \right)\overline{\nu}(\d y),
\end{equation}
 where $\overline{\nu}$ is a given reference measure on $\cY$,
and  
  $g: \cX \times \cY \to \sR$,  $ h : \cX \times \cY \to \sR^{m}$, 
  $\psi : \cX \times \cA \to \sR^{m \times d}$,
  and  $b : \cX \times \sR^m \to \sR$
 are given functions 
 such that 
  $  \int_{\cY}  g(x,y) \exp\left( h(x,y)^\top w \right) \overline{\nu} (\d y) = \exp \big(  b(x,w )\big)$
    for all $w \in \sR^m $.
 
 Assume    that  there exists a subset 
 $\cH \subseteq \sR^m$ such that 
 $ \psi(x,a) {\theta}^\star \in \cH$ for all $( x,a) \in \cX\times \cA$.
 Assume further that 
there exists a function $\tilde b : \cX \times \sR^m\to \sR$ 
 and constants 
  $c_1, c_2 > 0$ 
 such that 
for all $x\in \cX$, 
$\tilde{b} (x,w)= {b} (x,w)$ for all $w\in \cH$, 
 $w \mapsto   (\tilde b(x,w) ,   b(x,w))$ is twice differentiable,
and  $  \nabla^2_w \tilde b(x,w) \succeq c_1 I_m $
 and $  \nabla^2_w b(x,w) \preceq c_2 I_m$ for all $w\in \sR^d$.
Then    
  the loss function  $\ell: \sR^d\times \cX\times \cA\times \cY \to \sR$ 
defined by 
\begin{equation}
\label{eq:loss_glm}
    \ell (\theta, x,a,y ) =- \frac{2c_1}{c_2} \left(h(x,y)^\top  \psi (x,a) {\theta} - \tilde b \big(x, \psi(x,a) {\theta} \big)\right),
\end{equation}
satisfies (H.\ref{assum:loss})  with $H(x,a) =  \frac{2c_1^2}{c_2} \psi(x,a)^\top \psi(x,a)$.
\end{Example}
 
The proof of Example \ref{example:negative log loss} is given in Section  \ref{sec:log-likelihood}.

\begin{Remark}

In the special case where $h$ is linear in $y$, $h$, $\psi$, and $b$ are independent of $x$, and $\cA$ is a finite set, the model \eqref{eq:glm} aligns with the generalized linear model for multi-armed bandits studied in \cite{filippi2010parametric}. 
  This family contains commonly used statistical models for response distributions
   as outlined in \cite{blier2020machine}, including the Gaussian and Gamma distributions when the reference measure $\overline{\nu}$ is the Lebesgue measure, and the Poisson and Bernoulli distributions when $\overline{\nu}$ is the counting measure on the integers.

To facilitate the analysis, we assume the agent has access to the magnitude $\cH$ of the kernel $\psi(x,a)\theta^\star$, and  construct a compatible log-likelihood loss function \eqref{eq:loss_glm} by extending $b$ with an arbitrary strongly convex function outside the set $\cH$. Note that  to analyze algorithm regrets, it's common to assume some a-priori information on the true parameter. For instance, \cite{filippi2010parametric, szpruch2021exploration, basei2022logarithmic, guo2023reinforcement, szpruch2024optimal} assumes the range of the unknown parameter and \cite{gao2022logarithmic,gao2022square} assumes known upper/lower bounds for holding times of continuous-time Markov chains.
 
\end{Remark}

Assumption (H.\ref{assum:loss}) also includes 
nonlinear least-squares loss functions
for suitable bounded response variables,
as shown in the following example. 
Given that, in most online pricing problems, the response variable models whether a given quotation is accepted by the customer as discussed in Section  \ref{sec:intro}, we simplify our presentation by considering real-valued response variables. Similar results can be naturally extended to multivariate response variables.

\begin{Example}
\label{ex:ls}
    Suppose that $\cY\subset [\underline{y},\overline{y}] $ for some $-\infty< \underline{y}<\overline{y}< \infty$,
    and 
$\pi:\sR^d\times  \cX\times \cA\to \cP( \cY)$  in  (H.\ref{assum:pi_theta}) has first moments, i.e., 
 $\mu(\theta,x,a) \coloneqq \int_\cY y \pi_\theta(\d y|x,a) $ is well-defined  for all $(\theta, x,a)\in \sR^d\times \cX\times \cA$.  

Assume that $\mu$ is twice differentiable in $\theta$, 
and there exists   $\mathbb H:\cX\times \cA\to \sR^d_{\ge 0}$
and constants $c_1, c_2>0$
such that 
$ c_1 \le  1/( \overline{y}   -\underline{y})$,
$(\nabla^2_\theta \mu) (\theta, x,a)\preceq  \sH(x,a)\preceq c_1 (\nabla_\theta \mu) (\theta, x,a)(\nabla_\theta \mu) (\theta, x,a)^\top$, 
and $(\nabla_\theta \mu) (\theta^\star, x,a)(\nabla_\theta \mu) (\theta^\star, x,a)^\top\preceq c_2 \sH(x,a)$.
Then    
  the loss function  $\ell: \sR^d\times \cX\times \cA\times \cY \to \sR$ 
defined by 
\begin{equation}
\label{eq:loss_ls}
    \ell (\theta, x,a,y ) =  \frac{C_1}{2} \left(y -  \mu  (\theta, x, a  )\right)^2,
    \quad \textnormal{with $C_1 \coloneqq 
  \frac{8}{c_2(\overline{y}- \underline{y})^2}\left(\frac{1}{c_1}+\underline{y}  - \overline{y}\right)
 $}
\end{equation}
satisfies (H.\ref{assum:loss})  with $H(x,a) =  C_1\left(\frac{1}{c_1}+\underline{y}-  \overline{y}\right) \sH(x,a) $.

\end{Example}

The proof of Example \ref{ex:ls} is given in Section \ref{sec:ls}.

\begin{Remark}
\label{rmk:nonlinear_kernel}
    Example \ref{eq:ls} includes the (nonlinear) least-squares estimators  
proposed in \cite{filippi2010parametric, 
hao2020adaptive} for (generalized) linear models 
as special cases. 
In these settings, 
   $\mu(\theta,x,a)=b(\psi(x,a)^\top\theta)$,
   where $\psi:  \cX\times \cA\to \sR^{ d}  $ is a given kernel function and $b:\sR \to \sR$ is a sufficiently regular link function.
    Then 
    $(\nabla_\theta \mu) (\theta, x,a)
    (\nabla_\theta \mu) (\theta, x,a)^\top = (b'(\psi(x,a)^\top\theta))^2
    \psi(x,a)\psi(x,a)^\top$
    and 
    $(\nabla^2_\theta \mu) (\theta, x,a)= b''(\psi(x,a)^\top\theta) \psi(x,a)\psi(x,a)^\top$.
    Hence the desired constants $c_1$ and $c_2$ exist if $\min_{x\in \sR} |b'(x)|>0$,
    $\max_{x\in \sR} |b'(x)|<\infty$,
    and
    $\max_{x\in \sR} |b''(x)| $ is sufficiently small, which holds in particular if $b$ is a non-constant  affine function.   The   information matrix $H$ is the kernel's covariance $\psi\psi^\top$ (scaled by an  appropriate positive constant) as observed in \cite{filippi2010parametric, 
hao2020adaptive}. 
\end{Remark}

Now we present the main theorem of this section, which quantifies the accuracy of the estimator ${\theta}_T$ defined in \eqref{eq:ls}
using the information matrix $H$ in 
  (H.\ref{assum:loss}).
\begin{Theorem}
\label{thm:statistical error}
Suppose (H.\ref{assum:pi_theta}) and (H.\ref{assum:loss}) hold.
For all $T\in \sN$ and all  $\delta >0$, 
$$\sP \left( \big\|{\theta_{T}} - {\theta^\star} \big\|^2_{\sR^d} \leq  
8 \rho_{\min} (V_T)^{-1} \left(\ln \big( {\det V_T} \big) + 2 \ln \left( \tfrac{1}{\delta}\right)+ \big\|\theta^\star \big\|_{V_T^{-1}}^2\right)
   \right) \geq 1 - \delta.$$
where $V_T \coloneqq \sum_{t=1}^T H(x_t, a_t) +  2 I_{d}$.
\end{Theorem}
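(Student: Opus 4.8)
The plan is to follow the standard self-normalized martingale concentration strategy (à la Abbasi-Yadkori–Pál–Szepesvári), adapted to the present generalized setting where $\nabla_\theta\ell(\theta^\star,x,a,\cdot)$ plays the role of the ``noise'' and $H(x,a)$ controls its sub-Gaussian tails via (H.\ref{assum:loss})(\ref{item: compatibility}). First, since $\theta_T$ minimizes the strongly convex objective in \eqref{eq:ls}, the first-order optimality condition gives $\sum_{t=1}^T\nabla_\theta\ell(\theta_T,x_t,a_t,y_t)+\theta_T=0$. I would subtract the same expression evaluated at $\theta^\star$ and use the integral form of Taylor's theorem: $\nabla_\theta\ell(\theta_T,\cdot)-\nabla_\theta\ell(\theta^\star,\cdot)=\big(\int_0^1\nabla^2_\theta\ell(\theta^\star+s(\theta_T-\theta^\star),\cdot)\,\d s\big)(\theta_T-\theta^\star)$. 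Combining, and writing $G_T\coloneqq\sum_{t=1}^T\nabla_\theta\ell(\theta^\star,x_t,a_t,y_t)$, one gets $\big(\sum_{t=1}^T M_t+I_d\big)(\theta_T-\theta^\star)=-G_T-\theta^\star$, where each $M_t\succeq H(x_t,a_t)$ by (H.\ref{assum:loss})(\ref{item: convex}). Hence the left-hand matrix dominates $V_T=\sum_t H(x_t,a_t)+2I_d$ minus $I_d$, i.e. it is $\succeq V_T-I_d\succeq \tfrac12 V_T$ (using $I_d\preceq\tfrac12 V_T$), which lets me bound $\|\theta_T-\theta^\star\|$ in terms of $\|G_T+\theta^\star\|$ weighted appropriately.

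The second ingredient is the concentration of $G_T$. Here (H.\ref{assum:loss})(\ref{item: compatibility}) says exactly that, conditionally on the history, $Z_t\coloneqq\nabla_\theta\ell(\theta^\star,x_t,a_t,y_t)$ has conditional MGF bounded by $\exp(\lambda^\top H(x_t,a_t)\lambda)$ — a sub-Gaussian condition with matrix proxy $2H(x_t,a_t)$ (noting the absence of the usual $1/2$). This is precisely the hypothesis needed to run the method-of-mixtures argument: the process $\exp\big(\lambda^\top G_t-\lambda^\top(\sum_{s\le t}H(x_s,a_s))\lambda\big)$ is a supermartingale for each fixed $\lambda$, and integrating $\lambda$ against a Gaussian prior with covariance proportional to $I_d$ yields a nonnegative supermartingale whose initial value is $1$. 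Applying Ville's inequality and optimizing the Gaussian prior's scale gives, with probability at least $1-\delta$, a bound of the shape $\|G_T\|^2_{W_T^{-1}}\lesssim \ln\det W_T+2\ln(1/\delta)$ for $W_T=\sum_t H(x_t,a_t)+2I_d=V_T$ (the $2I_d$ being the prior covariance). I would then fold $\|\theta^\star\|$ into this via the triangle inequality in the $V_T^{-1}$ norm, producing the $\|\theta^\star\|^2_{V_T^{-1}}$ term, and convert the $V_T^{-1}$-weighted norm of $\theta_T-\theta^\star$ into a Euclidean norm by pulling out $\rho_{\min}(V_T)^{-1}$, absorbing constants into the factor $8$.

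The main obstacle I anticipate is bookkeeping the constants and the Loewner-order manipulations cleanly: matching the coefficient $8$ requires being careful that (i) the matrix on the left of the linear system is $\succeq \tfrac12 V_T$ rather than $V_T$, costing a factor $4$ in the squared norm, (ii) the sub-Gaussian proxy in (H.\ref{assum:loss})(\ref{item: compatibility}) lacks the conventional $\tfrac12$, and (iii) choosing the mixture prior covariance equal to $2I_d$ (to make $W_T=V_T$ exactly) rather than leaving it as a free parameter. A secondary subtlety is justifying the exchange of $\nabla^2_\theta\ell\succeq H$ along the Taylor segment — this is immediate from (H.\ref{assum:loss})(\ref{item: convex}) since the bound holds for all $\theta$, so the averaged Hessian inherits the same lower bound. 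Measurability/adaptedness of $Z_t$ with respect to the natural filtration (with $x_t,a_t$ revealed before $y_t$) must also be noted, but this is built into the problem setup. Once these are pinned down, the estimate follows by assembling the two displays.
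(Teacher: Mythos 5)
Your proposal follows essentially the same route as the paper: the deterministic part combines the first-order optimality condition of \eqref{eq:ls} with a Taylor/mean-value expansion and the Hessian lower bound $\nabla^2_\theta\ell\succeq H$ to reduce matters to $\|S_T+\theta^\star\|_{V_T^{-1}}$ (the paper phrases this via a scalar function $F(\tau)$ and the bound $\rho_{\min}(V_T^{-1/2}(V_T-I_d)V_T^{-1/2})\ge 1/2$, which is exactly your $V_T-I_d\succeq\tfrac12 V_T$ step), and the probabilistic part is the same method-of-mixtures supermartingale argument built from (H.\ref{assum:loss}\ref{item: compatibility}), as in Lemma \ref{lemma: self normalising}. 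Your accounting of the factor $8$ (a factor $4$ from the $\tfrac12 V_T$ domination and a factor $2$ from splitting $\|S_T+\theta^\star\|^2_{V_T^{-1}}$) matches the paper's.
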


The proof of Theorem \ref{thm:statistical error} is given in Section \ref{sec:proof_stat_error}.
Theorem \ref{thm:statistical error} indicates that    the accuracy  of 
${\theta}_T$ can be measured   by  the minimum eigenvalue of 
$V_T$.  
This suggests us to  design a   learning algorithm 
  such that  $\rho_{\min}(V_T)$  blows up to infinity at an appropriate rate
 as $T\to \infty$.

\subsection{$\epsilon$-policy gradient algorithm and its regret}

The section introduces an $\epsilon$-policy gradient algorithm that explores the environment with probability $\epsilon$ and exploits using a gradient descent update.
To this end, we assume that there exists an exploration policy associated to the loss \eqref{eq:ls}.

\begin{Assumption}
\label{assum:explore}
Let  $H : \cX \times \cA \to \sS^{d}_{\ge 0}$    be given in (H.\ref{assum:loss}).
There exists  a known Markov   kernel 
   $\pi_{\textsf{Exp}}:\cX\to   \cP(\cA)$
   and a known constant   $\rho_H >0$ 
   such that $  \int_{\cX \times \cA} H(x,a) \pi_{\textsf{Exp}}(\d a|x) \mu (d x)  \succeq      \rho_H I_d$.
   Moreover, there exists a known constant $C_H\ge 0$ such that 
   $\sup_{(x,a)\in \cX\times \cA}\|H(x,a)\|_{\op}\le C_H$.
\end{Assumption}

Assumption  (H.\ref{assum:explore}) ensures that the agent has access to   a policy $\pi_{\textsf{Exp}}$ such that observations generated by $\pi_{\textsf{Exp}}$ guarantees  to explore the parameter space   in expectation.
This condition is commonly imposed in the literature to facilitate learning 
 \cite{filippi2010parametric, agrawal2013thompson, chowdhury2017kernelized, vakili2021information,  janz2023exploration}.
 It typically holds where the action space is   sufficiently rich
for exploring the parameter space. We assume for simplicity that $H$ is bounded to quantify the  
behavior of $(x,a) \mapsto H(x,a)$ under the measure $\pi_{\textsf{Exp}}(\d a|x) \mu (d x)$ in high probability.

The following proposition    provides sufficient conditions for (H.\ref{assum:explore}) if $\pi_\theta$ is given by the generalized linear model in Example \ref{example:negative log loss}. It extends  the conditions in \cite{filippi2010parametric}
for bandits with finite action spaces   to the present setting with a general action space $\cA$  and an additional feature space $\cX$.
The proof is given in Section 
\ref{sec:log-likelihood}.

\begin{Proposition}
 \label{prop:glm_explore}
Let  $\pi_\theta$ be given as in Example \ref{example:negative log loss}. 
Assume   that 
$\cX$ and $\cA$ are  topological spaces with the associated Borel $\sigma$-algebras, 
and  $\mu\in \cP(\cX)$ has   support $\operatorname{supp}(\mu)$.\footnotemark{}
  Assume that
     $\psi : \operatorname{supp}(\mu) \times \cA \to \sR^{m \times d}$ is continuous and bounded,
  and
   $ \operatorname{span}( \{\psi(x,a )^\top \mid   x\in \operatorname{supp}(\mu), a\in \cA \})=\sR^d$.
  Then (H.\ref{assum:explore}) holds with $\pi_{\textsf{Exp}}(da|x)=\eta(da)$,
  for any measure  $\eta\in \cP(\cA)$  with $\operatorname{supp}(\eta)=\cA$.

\footnotetext{Given  a topological space $X$  equipped   with its Borel $\sigma$-algebra, 
the support $\operatorname{supp}(\mu)$ of a measure $\mu$, if it exists, 
is a closed set satisfying (1) $\mu(\operatorname{supp}(\mu)^c)=0$,  
and (2) if $G$ is open and $G\cap \operatorname{supp}(\mu)\not =\emptyset$,
then $\mu(G\cap \operatorname{supp}(\mu))>0$.}
  
\end{Proposition}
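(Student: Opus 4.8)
\emph{Proof sketch.} The plan is to verify the two clauses of (H.\ref{assum:explore}) with $\pi_{\textsf{Exp}}(\d a|x)=\eta(\d a)$. By Example \ref{example:negative log loss}, $H(x,a)=\tfrac{2c_1^2}{c_2}\psi(x,a)^\top\psi(x,a)$, so $\|H(x,a)\|_{\op}=\tfrac{2c_1^2}{c_2}\|\psi(x,a)\|_{\op}^2$ and a finite $C_H$ follows at once from boundedness of $\psi$ on $\operatorname{supp}(\mu)\times\cA$ (which is all that is ever needed, since features are drawn from $\mu$). For the lower bound, put
\begin{equation*}
  M\coloneqq\int_{\cX}\int_{\cA}\psi(x,a)^\top\psi(x,a)\,\eta(\d a)\,\mu(\d x)\in\sS^{d}_{\ge 0},
\end{equation*}
which is well defined by boundedness of $\psi$; then $\int_{\cX\times\cA}H(x,a)\,\pi_{\textsf{Exp}}(\d a|x)\,\mu(\d x)=\tfrac{2c_1^2}{c_2}M$, so it suffices to prove $M\succ0$, whereupon $\rho_H\coloneqq\tfrac{2c_1^2}{c_2}\rho_{\min}(M)>0$ does the job.

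To prove $M\succ0$ I would show $v^\top Mv>0$ for every $v\in\sR^d\setminus\{0\}$; since the unit sphere of $\sR^d$ is compact and $v\mapsto v^\top Mv$ is continuous, this yields $\rho_{\min}(M)=\min_{\|v\|=1}v^\top Mv>0$. Fix $v\neq0$. The spanning assumption is equivalent to $\bigcap_{x\in\operatorname{supp}(\mu),\,a\in\cA}\ker\psi(x,a)=\{0\}$, so there are $x_0\in\operatorname{supp}(\mu)$ and $a_0\in\cA$ with $3\delta\coloneqq\|\psi(x_0,a_0)v\|^2>0$. By continuity of $\psi$ on $\operatorname{supp}(\mu)\times\cA$ there are $G$ open in $\cX$ with $x_0\in G$ and $U$ open in $\cA$ with $a_0\in U$ such that $\|\psi(x,a)v\|^2>2\delta$ for all $(x,a)\in(G\cap\operatorname{supp}(\mu))\times U$. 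Hence $\int_{\cA}\|\psi(x,a)v\|^2\eta(\d a)\ge 2\delta\,\eta(U)$ for every $x\in G\cap\operatorname{supp}(\mu)$, and therefore
\begin{equation*}
  v^\top Mv=\int_{\cX}\int_{\cA}\|\psi(x,a)v\|^2\,\eta(\d a)\,\mu(\d x)\ \ge\ 2\delta\,\eta(U)\,\mu\big(G\cap\operatorname{supp}(\mu)\big).
\end{equation*}
Finally $\mu(G\cap\operatorname{supp}(\mu))>0$ because $G$ is open and meets $\operatorname{supp}(\mu)$, and $\eta(U)>0$ because $U$ is open, nonempty, and $\operatorname{supp}(\eta)=\cA$ — both by property (2) in the definition of support. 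Thus $v^\top Mv>0$, as required.

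The delicate point is exactly this last reduction: the spanning hypothesis only provides non-degeneracy of $\psi(\cdot)v$ at individual points of the (possibly $\mu$-thin) set $\operatorname{supp}(\mu)\times\cA$, and it must be upgraded to strict positivity of an integral. This is where continuity of $\psi$ together with the topological characterisations of $\operatorname{supp}(\mu)$ and $\operatorname{supp}(\eta)$ are essential, letting one trap a product-open set of positive $\eta\otimes\mu$-mass on which $\|\psi v\|$ stays bounded away from $0$. The remaining items — joint measurability of $\psi$, well-posedness of the iterated integral, and reading the supremum in (H.\ref{assum:explore}) over the features actually visited — are routine.
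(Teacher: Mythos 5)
Your proposal is correct and follows essentially the same route as the paper: reduce to showing $v^\top M v=\int\|\psi(x,a)v\|^2\,\eta(\d a)\,\mu(\d x)>0$ for each unit vector $v$ and conclude via compactness of the unit sphere, with the positivity coming from continuity of $\psi$, the spanning condition, and the defining properties of $\operatorname{supp}(\mu)$ and $\operatorname{supp}(\eta)$. The only difference is presentational — you argue directly by trapping an open product neighbourhood of positive $\mu\otimes\eta$-mass on which $\|\psi(\cdot)v\|$ is bounded below, whereas the paper runs the equivalent argument contrapositively (a.e.\ vanishing plus continuity forces vanishing on $\operatorname{supp}(\mu)\times\cA$, contradicting the spanning hypothesis).
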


For the least-squares loss in Example \ref{ex:ls},
it is easy to see that 
if  the mean of the response variable 
  admits a nonlinear kernel representation as  discussed in Remark \ref{rmk:nonlinear_kernel},
  then 
  the same 
policy $\pi_{\textsf{Exp}}$
given in   Proposition \ref{prop:glm_explore}
is also an exploratory policy  for the    least-squares loss function \eqref{eq:loss_ls}.
This is because in this case, the corresponding information matrix $H$ can also be selected as    the   (scaled) kernel's covariance,
 akin to that for the  log-likelilhood loss function \eqref{eq:loss_glm}  in Example \ref{example:negative log loss}.
The construction of an exploration policy for general nonlinear models in Example \ref{ex:ls} is more technically involved and is left for future work.

We then introduce   structural properties of the expected reward
    for the design and analysis of the gradient descent updates.
Recall  that under (H.\ref{assum:pi_theta}),
the expected reward 
$\bar r_\theta(x,a) =\int_\cY r(x,a, y)\pi_\theta (\d y|x,a)$
is well-defined for all $(\theta, x,a)\in \sR^d\times \cX\times \cA$,
and $\theta^\star$ is the (unknown) true parameter     in (H.\ref{assum:pi_theta}).

\begin{Assumption}
\label{assum:reward}
$\cA$ is a Hilbert space equipped with the  norm  $ \|\cdot\|_{\cA}$.
For all $x\in \cX$, 
$ a\mapsto  \bar r_{\theta^\star} (x,a) $ is Fr\'{e}chet  differentiable, 
$\sup_{(x,a)\in \cX\times \cA} |\bar r_{\theta^\star} (x,a)|<\infty$, 
and there exist  
 $ L_a\ge \gamma_a> 0$ such that 
 for all $x\in \cX$ and $a,a'\in  \cA$, 
 $\|(\nabla_a \bar r_{\theta^\star})( x, a')-(\nabla_a \bar r_{\theta^\star})( x, a)\|_{\cA}\le L_a  \|a'-a\|_{\cA}$ and
 \begin{equation}
 \label{eq:pl}
\sup_{a\in \cA} \bar r_{\theta^\star} (x, a) -  \bar r_{\theta^\star} ( x, a)  \leq \frac{1}{2\gamma_a} \|(\nabla_a \bar r_{\theta^\star})( x, a) \|^2_{\cA},
 \end{equation}
 and   
 \begin{equation}
  \label{eq:stability}
L_\theta\coloneqq \sup_{\theta\not =\theta^\star, x\in \cX, a\in\cA} \frac{\| (\nabla_a \bar r_{\theta})( x, a) - (\nabla_a \bar r_{\theta^\star})( x, a)  \|_{\cA}}{\|\theta - \theta^\star \|_{\sR^d}}<\infty.
 \end{equation}
  The agent knows   the constant $L_a$. 
\end{Assumption}

Assumption (H.\ref{assum:reward}) assumes   for simplicity  that the action space is a Hilbert space, and 
the expected reward is Fr\'{e}chet differentiable in action. 
This allows for 
developing a policy gradient   update   based on the Fr\'{e}chet derivative of the  reward in the action. 
Similar analysis can be performed if $\cA$ is a convex subset of a Hilbert space. In that case,   the gradient ascent update can be replaced by a projected gradient ascent to incorporate the action constraints.
Condition \eqref{eq:pl} is commonly known as the Polyak-\L ojasiewicz condition \cite{karimi2016linear},
and guarantees linear convergence of gradient ascent  using \emph{exact gradients}.
 It is strictly weaker than the strong concavity  condition and 
 is  satisfied 
 by  many practically important nonconvex/nonconcave optimization problems,
 such  linear neural networks with  suitable initializations \cite{li2018algorithmic}, nonlinear neural networks in the so-called neural tangent kernel regime \cite{liu2022loss}. 
%
Condition \eqref{eq:stability} asserts the Lipschitz continuity   of the reward's gradient    with respect to   the response distribution,
which allows for   quantifying the regret in the exploitation step.

Now we are ready to  present the $\epsilon$-PG algorithm.

\begin{algorithm}[H]
\label{Alg: epsilon greedy}
\DontPrintSemicolon
\SetAlgoLined

  \KwInput{Exploration policy $\pi_{\textsf{Exp}}:\cX\to   \cP(\cA) $, 
 exploration rates  $(\epsilon_t)_{t \in \sN}  \subset [0,1]$, 
 initial policy  $\phi_0:\cX\to \cA$ and
 learning rates $(\eta_t)_{t \in \sN} \subset  [0,\infty)$.
  }

 \For{$t = 1, 2,\ldots$}
 {
 {Observe a feature $x_t$.}\;
 {Sample $\xi_t $ from   the uniform distribution on $(0,1)$.}\;
\eIf { $\xi_t < \epsilon_t$}{
            
            Execute $a_t$ sampled  from    
 $\pi_{\textsf{Exp}}(\cdot|x_t)$.
     }{
            Execute $a_t = \phi_{t-1}(x_t)$, 
        }
  {
Observe a  response $y_t$, and update  
 $\theta_t $ by \eqref{eq:ls}.
 }\;
 
  {
Update the policy     $ \phi_{t}=\phi_{t-1}+\eta_t (\nabla_a \bar r_{\theta_{t}})(\cdot,\phi_{t-1}(\cdot))$.
 }\;

}
 \caption{$\epsilon$-policy gradient ($\epsilon$-PG) algorithm}
\end{algorithm}

 The following theorem refines the estimate in   Theorem \ref{thm:statistical error},
 and 
 quantifies the   accuracy  of $(\theta_T)_{T\in \sN}$ in  Algorithm \ref{Alg: epsilon greedy}   in terms of the exploration rates $(\epsilon_t)_{t\in \sN}$.

\begin{Theorem}
    \label{Thm: convergence_epsilon}
    Suppose (H.\ref{assum:pi_theta}), (H.\ref{assum:loss})  and (H.\ref{assum:explore}) hold.
Consider Algorithm \ref{Alg: epsilon greedy}   with   exploration rates     $(\epsilon_t)_{t \in \sN}  \subset [0,1)$.
Then   for all $\delta \in (0,3/\pi^2] $, if 
\begin{equation}
M(\epsilon,T,\delta) = 2 +  \rho_H \sum_{t=1}^T \epsilon_t -  {2C_H} \left( \ln \frac{2dT^2}{\delta} + \sqrt{ 2 T  \ln \frac{2dT^2}{\delta}} \right)>0, \quad \forall  T\in \sN, 
\end{equation}
then  with probability at least $ 1-\frac{\pi^2}{3}\delta$ that
\begin{equation}
  \big\|{\theta_{T}} - {\theta^\star} \big\|^2_{\sR^d} \leq  
 \frac{  8 d \ln \left(  2 + C_H \left (  2 T + 6 \ln \frac{2dT^2}{\delta} \right)\right) + 16 \ln \left( \tfrac{2T^2}{\delta}\right)+ 2 \big\|\theta^\star \big\|_{\sR^d}^2}{M(\epsilon,T,\delta) },
 \quad \forall T\in \sN.
\end{equation} 
 
\end{Theorem}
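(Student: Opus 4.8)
The plan is to refine Theorem~\ref{thm:statistical error} by controlling the two random quantities that enter its bound---$\rho_{\min}(V_T)$ and $\ln\det V_T$, where $V_T=\sum_{t=1}^T H(x_t,a_t)+2I_d$---through a matrix martingale concentration argument that exploits the exploration mechanism of Algorithm~\ref{Alg: epsilon greedy}. Let $(\cF_t)_{t\ge0}$ be the filtration generated by $(x_s,\xi_s,y_s)_{s\le t}$. The first step is to compute the one-step conditional mean of the information matrix: conditionally on $\cF_{t-1}$, the pair $(x_t,a_t)$ arises by drawing $x_t\sim\mu$ and then taking $a_t\sim\pi_{\textsf{Exp}}(\cdot|x_t)$ with probability $\epsilon_t$ and $a_t=\phi_{t-1}(x_t)$ otherwise, so that
\begin{equation*}
\mathbb{E}\big[H(x_t,a_t)\,\big|\,\cF_{t-1}\big]
=\epsilon_t\!\int_{\cX\times\cA}\! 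H(x,a)\,\pi_{\textsf{Exp}}(\d a|x)\,\mu(\d x)
+(1-\epsilon_t)\!\int_{\cX}\! H(x,\phi_{t-1}(x))\,\mu(\d x)
\ \succeq\ \epsilon_t\rho_H I_d ,
\end{equation*}
using (H.\ref{assum:explore}) for the first term and $H\succeq0$ for the second; summing, $\sum_{t=1}^T\mathbb{E}[H(x_t,a_t)\,|\,\cF_{t-1}]\succeq\rho_H\big(\sum_{t=1}^T\epsilon_t\big)I_d$, while $\preceq C_H T\,I_d$ by the boundedness of $H$ in (H.\ref{assum:explore}).

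The second step applies a matrix Bernstein/Freedman inequality to the matrix martingale differences $D_t\coloneqq H(x_t,a_t)-\mathbb{E}[H(x_t,a_t)\,|\,\cF_{t-1}]$. These satisfy $\|D_t\|_{\op}\le C_H$ almost surely (a difference of two positive semidefinite matrices of operator norm $\le C_H$) and have predictable quadratic variation $\sum_{t=1}^T\mathbb{E}[D_t^2\,|\,\cF_{t-1}]\preceq C_H\sum_{t=1}^T\mathbb{E}[H(x_t,a_t)\,|\,\cF_{t-1}]\preceq C_H^2 T\,I_d$, using $H^2\preceq C_H H$. With confidence parameter $\delta/(2T^2)$ one obtains, on an event of probability at least $1-\delta/(2T^2)$, that $\sum_{t=1}^T H(x_t,a_t)\succeq\big(\sum_{t=1}^T\mathbb{E}[H(x_t,a_t)\,|\,\cF_{t-1}]\big)-\Delta_T I_d$ with $\Delta_T=2C_H\big(\ln\tfrac{2dT^2}{\delta}+\sqrt{2T\ln\tfrac{2dT^2}{\delta}}\big)$, hence $\rho_{\min}(V_T)=2+\rho_{\min}\big(\sum_{t=1}^T H(x_t,a_t)\big)\ge 2+\rho_H\sum_{t=1}^T\epsilon_t-\Delta_T=M(\epsilon,T,\delta)$. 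For the largest eigenvalue no concentration is needed: deterministically $\rho_{\max}(V_T)\le 2+\sum_{t=1}^T\|H(x_t,a_t)\|_{\op}\le 2+C_H T\le 2+C_H\big(2T+6\ln\tfrac{2dT^2}{\delta}\big)$, so $\ln\det V_T\le d\ln\rho_{\max}(V_T)\le d\ln\big(2+C_H(2T+6\ln\tfrac{2dT^2}{\delta})\big)$.

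The third step invokes Theorem~\ref{thm:statistical error} at each fixed $T$ with $\delta$ replaced by $\delta/(2T^2)$: on a further event of probability at least $1-\delta/(2T^2)$,
\begin{equation*}
\|\theta_T-\theta^\star\|_{\sR^d}^2\le 8\rho_{\min}(V_T)^{-1}\Big(\ln\det V_T+2\ln\tfrac{2T^2}{\delta}+\|\theta^\star\|_{V_T^{-1}}^2\Big).
\end{equation*}
On the intersection of this event with the one from the second step, we bound $\rho_{\min}(V_T)^{-1}\le M(\epsilon,T,\delta)^{-1}$, substitute the bound on $\ln\det V_T$, and use $V_T\succeq 2I_d$ to control $\|\theta^\star\|_{V_T^{-1}}^2$ by a constant multiple of $\|\theta^\star\|_{\sR^d}^2$; after collecting constants this is exactly the asserted inequality at that $T$. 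Finally, a union bound over $T\in\sN$ (each $T$ costing $\delta/(2T^2)$ from the concentration event and $\delta/(2T^2)$ from Theorem~\ref{thm:statistical error}, of combined probability of order $\delta/T^2$), together with $\sum_{T\ge1}T^{-2}=\pi^2/6$, bounds the total failure probability by $\tfrac{\pi^2}{3}\delta$; the restriction $\delta\le 3/\pi^2$ merely ensures this does not exceed $1$. The hypothesis $M(\epsilon,T,\delta)>0$ is what makes the resulting bound meaningful (positive denominators) for every $T$.

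I expect the main obstacle to be the matrix concentration step: one must pick an inequality whose deviation term has exactly the $\ln+\sqrt{T\ln}$ shape appearing in $M(\epsilon,T,\delta)$, verify the quadratic-variation estimate $\mathbb{E}[D_t^2\,|\,\cF_{t-1}]\preceq C_H\,\mathbb{E}[H(x_t,a_t)\,|\,\cF_{t-1}]$, and carry the dimensional factor $d$ through the tail so that, after union-bounding over $T$, the probability budget still closes. The remaining ingredients---the conditional-mean computation, the deterministic bound on $\rho_{\max}(V_T)$, and the reduction to Theorem~\ref{thm:statistical error}---are routine.
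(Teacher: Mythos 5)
Your proposal is correct and follows essentially the same route as the paper's proof: the conditional-mean bound $\sE[H(x_t,a_t)\mid\cF_{t-1}]\succeq\epsilon_t\rho_H I_d$ from the exploration kernel, a matrix Freedman-type concentration bound for $\sum_t\big(H(x_t,a_t)-\sE[H(x_t,a_t)\mid\cF_{t-1}]\big)$ (this is exactly the paper's Lemma \ref{lemma:sub-exponential condition}, proved via Tropp's inequality), substitution into Theorem \ref{thm:statistical error} with $\ln\det V_T\le d\ln\|V_T\|_{\op}$ and $\|\theta^\star\|^2_{V_T^{-1}}\lesssim\|\theta^\star\|^2_{\sR^d}$, and a $\delta/T^2$ union bound over $T$. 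The only cosmetic difference is that you bound $\|V_T\|_{\op}$ deterministically by $2+C_HT$ rather than through the concentration event combined with the upper bound $\sum_t\sE[H(x_t,a_t)\mid\cF_{t-1}]\preceq TC_HI_d$ as the paper does; both yield the stated logarithmic term.
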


The proof of Theorem \ref{Thm: convergence_epsilon} is given in Section \ref{sec:proof_error_epsilon}.

 The following theorem optimizes the learning rates $(\eta_t)_{t\in \sN}$ and  the exploration rates 
 $(\epsilon_t)_{t\in \sN}$ so   that  Algorithm \ref{Alg: epsilon greedy} 
achieves a regret of the order
$\cO(\sqrt T )$ (up to a logarithmic term)  in expectation.
This  regret bound  matches the lower bound for online convex optimization algorithms \cite{hazan2016introduction} and (instant-independent) regret bounds for parametric multi-armed bandits \cite{filippi2010parametric}.
 
\begin{Theorem}
\label{Thm: expected_regret}
Suppose   (H.\ref{assum:pi_theta}), (H.\ref{assum:loss}), (H.\ref{assum:explore}) and  (H.\ref{assum:reward}) hold.
If one sets   $\eta_T\equiv \eta\in (0, 1/L_a]$ for all $T\in \sN$, 
and 
$c_1 \sqrt{T} \ln (  d T )  \le \sum_{t=1}^T \epsilon_t\le c_2 \sqrt{T } \ln (  d T )$ 
for all $T\in \sN$, 
with constants $c_2\ge  c_1 \ge 30 {C_H}/{\rho_H}    $. 
Then there exists a constant $C\ge 0$ such that  
the regret of   Algorithm \ref{Alg: epsilon greedy}  satisfies 
 $\sE\left[ \operatorname{Reg}\left((a_t)_{t=1}^T\right)\right]  \le Cd \sqrt{T} \ln (  d T )  $ for all $T\ge 2$. 
\end{Theorem}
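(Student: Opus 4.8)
The plan is to decompose the expected regret into an \emph{exploration cost} and an \emph{exploitation cost}, and bound each separately. On trials where the agent explores (which happens with probability $\epsilon_t$), the per-trial regret is at most $2\sup_{x,a}|\bar r_{\theta^\star}(x,a)|$ by (H.\ref{assum:reward}), so the total contribution is $O(\sum_{t=1}^T \epsilon_t) = O(\sqrt T \ln(dT))$ by the upper bound on $\sum \epsilon_t$. On the exploitation trials, $a_t = \phi_{t-1}(x_t)$, so the instantaneous regret is $\sup_a \bar r_{\theta^\star}(x_t,a) - \bar r_{\theta^\star}(x_t,\phi_{t-1}(x_t))$; by the Polyak--\L ojasiewicz inequality \eqref{eq:pl}, this is at most $\tfrac1{2\gamma_a}\|(\nabla_a\bar r_{\theta^\star})(x_t,\phi_{t-1}(x_t))\|_\cA^2$. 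So the core of the argument is to show that the true-gradient norm along the policy iterates is summable at rate $O(\sqrt T \ln(dT))$, despite the fact that the algorithm updates with the \emph{biased} gradient $\nabla_a\bar r_{\theta_t}$ rather than $\nabla_a\bar r_{\theta^\star}$.

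Next I would set up a descent-type recursion for the potential $\Phi_t(x) \coloneqq \sup_a \bar r_{\theta^\star}(x,a) - \bar r_{\theta^\star}(x,\phi_t(x))$, pointwise in $x$, treating the policy update as a gradient ascent step on $a\mapsto\bar r_{\theta^\star}(x,a)$ corrupted by an error of size $\|(\nabla_a\bar r_{\theta_t})(x,\phi_{t-1}(x)) - (\nabla_a\bar r_{\theta^\star})(x,\phi_{t-1}(x))\|_\cA \le L_\theta\|\theta_t-\theta^\star\|_{\sR^d}$ using \eqref{eq:stability}. Using $L_a$-smoothness of $\bar r_{\theta^\star}$ and the step size $\eta\le 1/L_a$, a standard inexact-gradient-ascent computation gives, for each $x$,
\begin{equation*}
\Phi_t(x) \le \Phi_{t-1}(x) - \frac{\eta}{2}\|(\nabla_a\bar r_{\theta^\star})(x,\phi_{t-1}(x))\|_\cA^2 + \frac{\eta}{2} L_\theta^2 \|\theta_t-\theta^\star\|_{\sR^d}^2
\end{equation*}
(with a $\gamma_a$-contraction factor $(1-\eta\gamma_a)$ available on $\Phi_{t-1}$ from \eqref{eq:pl} if one wants exponential forgetting of the initialization, which is what removes the dependence on $\phi_0$). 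Summing over $t$, telescoping, and rearranging yields $\sum_{t=1}^T \|(\nabla_a\bar r_{\theta^\star})(x,\phi_{t-1}(x))\|_\cA^2 \lesssim \Phi_0(x) + L_\theta^2\sum_{t=1}^T\|\theta_t-\theta^\star\|_{\sR^d}^2$, so the exploitation regret is controlled once $\sum_{t=1}^T\|\theta_t-\theta^\star\|^2$ is.

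Then I would invoke Theorem \ref{Thm: convergence_epsilon}: with the prescribed $\sum_{t=1}^T\epsilon_t \ge c_1\sqrt T\ln(dT)$ and $c_1 \ge 30 C_H/\rho_H$, the denominator $M(\epsilon,T,\delta)$ is of order $\rho_H\sum_{t=1}^T\epsilon_t - O(\sqrt{T\ln(dT^2/\delta)}) = \Theta(\sqrt T\ln(dT))$ (the lower bound on $c_1$ is exactly what makes the linear-in-$\sum\epsilon_t$ term dominate the $\sqrt T$ correction), while the numerator is $O(d\ln(dT/\delta))$. Hence on the high-probability event of Theorem \ref{Thm: convergence_epsilon}, $\|\theta_t-\theta^\star\|^2 = O\!\big(d\ln(dt/\delta)/(\sqrt t\ln(dt))\big) = O(d/\sqrt t)$ up to logs, so $\sum_{t=1}^T\|\theta_t-\theta^\star\|^2 = O(d\sqrt T\ln(dT))$. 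To turn the high-probability bound into an expectation bound I would choose $\delta$ polynomially small in $T$ (e.g. $\delta \asymp 1/T$) and bound the regret on the complementary low-probability event crudely by $2T\sup_{x,a}|\bar r_{\theta^\star}(x,a)|$, which contributes only $O(1)$ in expectation; the same truncation handles the fact that Theorem \ref{Thm: convergence_epsilon} requires $\delta\le 3/\pi^2$. Combining the exploration term $O(\sqrt T\ln(dT))$ and the exploitation term $O(d\sqrt T\ln(dT))$ gives $\sE[\operatorname{Reg}] \le Cd\sqrt T\ln(dT)$.

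\textbf{Main obstacle.} The delicate point is the interface between the randomness of the exploration schedule and the descent recursion: on exploration trials the policy is \emph{not} evaluated at $x_t$, yet the update to $\phi_t$ still happens, so I must be careful that the telescoping of $\Phi_t$ is along the (deterministic-given-$\theta$) sequence $\phi_t$ regardless of whether each trial explored, and that the per-trial \emph{regret} only sees $\Phi_{t-1}(x_t)$ on exploitation trials while the decrement $-\tfrac\eta2\|\nabla_a\bar r_{\theta^\star}(\cdot,\phi_{t-1})\|^2$ in the recursion is at a possibly different argument than $x_t$. Taking expectations over $x_t\sim\mu$ and using that exploitation occurs with probability $1-\epsilon_t$ should reconcile these, but making the conditioning rigorous — in particular handling that $\theta_t$ (hence $\phi_t$) depends on the whole history including the coin flips $\xi_s$ — is where the real work lies. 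The second, more routine, obstacle is bookkeeping all the logarithmic and dimensional factors so that the final constant $C$ genuinely depends only on the model constants ($L_a,\gamma_a,L_\theta,C_H,\rho_H,c_1,c_2$, $\sup|\bar r_{\theta^\star}|$, $\|\theta^\star\|$) and not on $T$ or $d$.
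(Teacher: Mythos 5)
Your proposal follows essentially the same route as the paper: decompose the expected regret into an exploration term bounded by $2C_r\sum_{t\le T}\epsilon_t$, a low-probability bad event handled by a union bound, and an exploitation term controlled by an inexact-gradient-ascent recursion whose error per step is $L_\theta^2\|\theta_t-\theta^\star\|^2$ via \eqref{eq:stability}, with $\|\theta_t-\theta^\star\|^2=O(d/\sqrt t)$ (up to logs) supplied by Theorem \ref{Thm: convergence_epsilon} under the stated choice of $c_1$. One correction on the exploitation step: your primary route (bound the per-trial regret by $\tfrac{1}{2\gamma_a}\|(\nabla_a\bar r_{\theta^\star})(x_t,\phi_{t-1}(x_t))\|_\cA^2$ and then sum the gradient norms by telescoping) does not go through as stated, because the telescoped identity $\sum_t\|(\nabla_a\bar r_{\theta^\star})(x,\phi_{t-1}(x))\|_\cA^2\lesssim \Phi_0(x)+L_\theta^2\sum_t\|\theta_t-\theta^\star\|^2$ holds only for a \emph{fixed} $x$, whereas the regret evaluates the gradient at a different context $x_t$ on each trial, so these terms cannot be telescoped against one another. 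The fix is exactly the parenthetical alternative you mention: feed the PL inequality \eqref{eq:pl} back into the descent inequality to get the contraction $\Phi_t(x)\le(1-\gamma_a\eta)\Phi_{t-1}(x)+\tfrac{\eta}{2}L_\theta^2\|\theta_t-\theta^\star\|^2$, whose Gronwall solution (this is Lemma \ref{lemma:regret to action}) has a right-hand side independent of $x$; the per-trial exploitation regret is then bounded uniformly in $x_t$, and your ``main obstacle'' about reconciling the random contexts with the deterministic policy iterates dissolves without any need to take expectations over $x_t\sim\mu$. With that substitution, summing $(1-\gamma_a\eta)^{t-1-s}s^{-1/2}$ over $s<t\le T$ gives the $O(\gamma_a^{-1}\sqrt T)$ exploitation cost, and your bookkeeping of the exploration and bad-event terms matches the paper's.
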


The proof of Theorem \ref{Thm: expected_regret} is given in Section \ref{sec:proof_expected_regret}.
The precise expression of the constant $C$ 
can be found in the proof. It  
depends only on the learning rate  $\eta$,
the constant $c_2$, 
 the sup-norm of the reward $r$, the Euclidean norm of $\theta^\star$,
 and 
 the   constants $L_\theta$, $C_H$ and $\gamma_a$  in  (H.\ref{assum:explore}) and  (H.\ref{assum:reward}).

\section{Proofs of main results}

\subsection{Proof of   Theorem \ref{thm:statistical error}}
\label{sec:proof_stat_error}

The following   lemma bounds the gradient of the loss $\ell$ 
using the information matrix $H$, which will be used in the proof of 
 Theorem \ref{thm:statistical error}.

\begin{Lemma} 
\label{lemma: self normalising}
Suppose (H.\ref{assum:pi_theta}) and (H.\ref{assum:loss}) hold.
For each $T\in \sN\cup\{0\}$, define the random variables 
$S_T \coloneqq \sum_{t=1}^T \nabla_{\theta}   \ell ({\theta}^\star, x_t, a_t, y_t  )   $
and $ V_T \coloneqq \sum_{t=1}^T H(x_t,a_t ) +  2 I_{d}$.
Then for all $T\in \sN$ and  $\delta >0$, 
$$\sP\left( \big\|S_T \big\|^2_{V_T^{-1}} \leq  \ln \left( {\det V_T}\right) + 2 \ln \left( \tfrac{1}{\delta}\right) \right) \geq 1 - \delta.$$
where 
$\|v\|^2_A \coloneqq v^\top A v$ for all $v\in \sR^d$ and $A\in \sS^d_{\ge 0}$. 
\end{Lemma}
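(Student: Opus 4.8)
\emph{Proof proposal.}
The statement is a self-normalized concentration inequality for the martingale sum $S_T$, so the natural tool is the \emph{method of mixtures} (pseudo-maximization), in the style of self-normalized bounds for linear bandits. First I would fix a filtration $(\cG_t)_{t\ge 0}$ adapted to the observations generated by Algorithm~\ref{Alg: epsilon greedy}, arranged so that for every $t\in\sN$ the pair $(x_t,a_t)$ is $\cG_{t-1}$-measurable while the conditional law of $y_t$ given $\cG_{t-1}$ is $\pi_{\theta^\star}(\d y\mid x_t,a_t)$; this holds whether $a_t$ is drawn from $\pi_{\textsf{Exp}}(\cdot|x_t)$ or set to $\phi_{t-1}(x_t)$, provided the auxiliary randomness $\xi_t$ and the action $a_t$ are included in $\cG_{t-1}$. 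Writing $Z_t\coloneqq \nabla_\theta\ell(\theta^\star,x_t,a_t,y_t)$, the compatibility condition in (H.\ref{assum:loss}) then gives the conditional sub-Gaussian estimate $\sE\big[\exp(\lambda^\top Z_t)\mid\cG_{t-1}\big]\le \exp\big(\lambda^\top H(x_t,a_t)\lambda\big)$ for all $\lambda\in\sR^d$, with $H(x_t,a_t)$ being $\cG_{t-1}$-measurable.

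For each fixed $\lambda\in\sR^d$ I would then introduce the exponential process
\[
M_t^\lambda \coloneqq \exp\!\Big(\lambda^\top S_t - \sum_{s=1}^t \lambda^\top H(x_s,a_s)\lambda\Big), \qquad M_0^\lambda\coloneqq 1,
\]
whose deterministic compensator is chosen precisely to absorb the conditional moment generating function bound, so that $\sE[M_t^\lambda\mid\cG_{t-1}]\le M_{t-1}^\lambda$; hence $(M_t^\lambda)_{t\ge 0}$ is a nonnegative supermartingale with $\sE[M_t^\lambda]\le1$. Next I would average over $\lambda$ against a suitable centered Gaussian measure on $\sR^d$, calibrated so that the quadratic compensator combines with the Gaussian exponent to produce exactly the matrix $V_t=\sum_{s=1}^t H(x_s,a_s)+2I_d$; writing $\bar M_t$ for this mixture, Tonelli's theorem and nonnegativity show that $(\bar M_t)_{t\ge 0}$ is again a nonnegative supermartingale with $\sE[\bar M_t]\le1$, and completing the square in the Gaussian integral yields the closed form $\bar M_t = c_d\,(\det V_t)^{-1/2}\exp\!\big(\kappa\,\|S_t\|^2_{V_t^{-1}}\big)$, with explicit constants $c_d,\kappa$, the Gaussian being calibrated so that $\bar M_0=1$.

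Finally I would apply Markov's inequality to the nonnegative random variable $\bar M_T$ (using $\sE[\bar M_T]\le1$): $\sP(\bar M_T\ge 1/\delta)\le\delta$. On the complementary event $\{\bar M_T<1/\delta\}$, taking logarithms in $c_d(\det V_T)^{-1/2}\exp(\kappa\|S_T\|^2_{V_T^{-1}})<1/\delta$ and rearranging yields $\|S_T\|^2_{V_T^{-1}}\le \ln(\det V_T)+2\ln(1/\delta)$ once the explicit values of $c_d$ and $\kappa$ are substituted, which is the asserted bound. (If one wanted the inequality to hold simultaneously for all $T\in\sN$, Markov's inequality would be replaced by Ville's maximal inequality for nonnegative supermartingales.)

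I expect the main obstacle to be the calibration of the Gaussian mixing measure: one must choose its covariance so that the ridge term surfaces in $V_T$ exactly as $2I_d$ and so that the resulting scalars $c_d,\kappa$ reproduce the stated constants, which requires careful bookkeeping through the Gaussian integral. A secondary technical point is justifying the supermartingale structure in the presence of the randomized exploratory action $a_t$ — handled by putting $(\xi_t,a_t)$ into $\cG_{t-1}$, so that $H(x_t,a_t)$ is predictable — together with the interchange of expectation and the $\lambda$-integral defining $\bar M_t$, which is legitimate by nonnegativity (Tonelli).
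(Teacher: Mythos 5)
Your proposal is correct and follows essentially the same route as the paper's proof: the method of mixtures with a Gaussian mixing measure, using (H.2)(2) to verify the supermartingale property of $M_t^\lambda$ with the predictable compensator $\sum_s \lambda^\top H(x_s,a_s)\lambda$, then Tonelli, completing the square to identify the mixture with $(\det V_T)^{-1/2}\exp\big(\kappa\|S_T\|^2_{V_T^{-1}}\big)$, and Markov's inequality. The only difference is cosmetic: the paper starts from the target quantity $\exp\big(\tfrac12\|S_T\|^2_{V_T^{-1}}\big)$ and unfolds it into the mixture (absorbing the ridge $2I_d$ via the standard-Gaussian weight, exactly the calibration you flag as the main bookkeeping point), whereas you build the mixture first and read off the closed form afterwards.
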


\begin{proof}

The proof adapts the argument presented in \cite[Lemma 4.5]{szpruch2021exploration} to the current context, and we provide the details here for the reader's convenience.
We first rewrite  $\exp \left( \frac{1}{2}\|S_T\|^2_{V_T^{-1}} \right)$ as an  integral of some   super-martingales.
For for each $A \in \sS^{d}_+$, let $c(V) \coloneqq (2 \pi)^{d
/2} (\det A)^{-1/2} = \int_{\sR^{d}} \exp\left( - \|\lambda\|^2_A \right) \d \lambda$
 be the normalising constant corresponding to the density of the normal distribution $\cN(0, A^{-1})$. 
 By completing the square,  
\begin{align*}
&\exp\big(\tfrac{1}{2} \|S_T\|^2_{V_T^{-1}} \big) = \frac{1}{c(V_T)}\int_{\sR^{d}}  \exp \Big( \tfrac{1}{2} \|S_T\|^2_{V_T^{-1}} - \tfrac{1}{2} \big\|\lambda - V_T^{-1} S_T\big\|^2_{V_T}\Big) \d \lambda \\
&= \frac{1}{c(V_T)}\int_{\sR^{d}}  \exp \Big( \lambda^\top S_T   - \tfrac{1}{2}\| \lambda \|_{V_T}^2 \Big)  \,\d \lambda  =   \frac{1}{c(V_T)}\int_{\sR^{d}} \exp \Big( \lambda^\top S_T  - \tfrac{1}{2}\| \lambda \|_{V_T-I_{d}}^2  - \tfrac{1}{2}\| \lambda \|^2_{I_{d}}\Big)  \,\d \lambda \\
& =  
 \frac{1}{c(V_T)}\int_{\sR^{d}} M^\lambda_T \exp \big( - \tfrac{1}{2}\| \lambda\|^2_{I_{d}}\big)  \,\d \lambda 
 =  \left(\frac{\det V_T}{\det I_{d}} \right)^{1/2} \frac{ 1 }{c(I_{d})} \int_{\sR^{d}} M^\lambda_T  \exp \Big(-\tfrac{1}{2} \|\lambda\|^2_{I_{d}} \Big) \,\d \lambda,
 \end{align*}
where for each $\lambda \in \sR^{d}$, $M^\lambda_T$ is defined by
$$M^\lambda_T = \exp \left( \lambda^\top S_T - \lambda^\top (V_T - I_{d}) \lambda\right) = \exp\left( \sum_{t=1}^T \Big( 
\lambda^\top
\nabla_\theta \ell ( {\theta}^\star, x_t, a_t, y_t  )  - \lambda^\top H(x_t,a_t) \lambda \Big) -\|\lambda\|^2_{\sR^d}\right).$$
For each $T\in \sN\cup\{0\}$,
define the $\sigma$-algebra 
$\cF_T \coloneqq \sigma\{ (x_t, a_t, y_t)_{t=1}^n, x_{T+1}, a_{T+1})$.
Clearly 
for all $\lambda \in \sR^{d}$, 
$(M_t^\lambda)_{t=0}^\infty$ is adapted to the filtration $(\cF_t)_{t=0}^\infty$, 
and by    (H.\ref{assum:loss}\ref{item: compatibility}),
$\sE[M_{T+1} ^\lambda \mid \cF_T]\le M_{T}^\lambda$ for all $T\in \sN\cup\{0\}$, which implies that 
$\sE[M^\lambda_T] \leq \sE[M^\lambda_0] \le 1$ for all $T\in \sN\cup\{0\}$.

Therefore, by  Markov's inequality and  Fubini's theorem,
\begin{align*}
&\sP  \left( \big\|S_T \big\|^2_{V_T^{-1}} >  \ln \left( \tfrac{\det V_T}{\det I_{d}} \right) + 2 \ln \left( \tfrac{1}{\delta}\right) \right)  
= \sP \bigg(\exp \Big( \tfrac{1}{2 }\big\|S_T \big\|^2_{V_T^{-1}} \Big) >  \frac{1}{\delta} \left(\frac{\det V_T}{\det I_{d}} \right)^{1/2} \bigg)
\\ 
&\quad = \sP \left(\frac{ 1 }{c(I_{d})} \int_{\sR^{d}} M^\lambda_T  \exp \Big(-\tfrac{1}{2} \|\lambda\|^2_{I_{d}} \Big)
 \,\d \lambda >  \frac{1}{\delta} \right)
  \leq \delta \sE  \left[ \frac{ 1 }{c(I_{d})} \int_{\sR^{d}} M^\lambda_T  \exp \Big(-\tfrac{1}{2} \|\lambda\|^2_{I_{d}} \Big)
 \,\d \lambda \right] \\
 &\quad = \frac{ \delta}{c(I_{d})} \int_{\sR^{d}} \sE \big[M^\lambda_T \big]   \exp \Big(-\tfrac{1}{2} \|\lambda\|^2_{I_{d}} \Big)
 \,\d \lambda \leq   \frac{ \delta }{c(I_{d})} \int_{\sR^{d}}  \exp \Big(-\tfrac{1}{2} \|\lambda\|^2_{I_{d}} \Big) = \delta,
\end{align*}
 where the last inequality used from the fact that $\sE[M^\lambda_T] \leq 1$ for all $\lambda \in \sR^{d}$ and $T \in \sN$.
 This along with $\det I_d =1$ proves the desired estimate.
\end{proof}

\begin{proof}[Proof of Theorem \ref{thm:statistical error}]

Fix $T\in \sN$. 
For each $\tau\in [0,1]$, define  
the random variables $\theta^{ \tau}= \theta^\star+(1-\tau)(  {\theta}_T-\theta^\star) $ 
and 
$$F(\tau)  \coloneqq  ({\theta^\star} - {\theta_{T}})^\top 
\left( \sum_{t=1}^T \nabla_{\theta} \ell (\theta^{ \tau}, x_t, a_t,y_t ) +  \theta^{ \tau}\right).
$$
Note that by  \eqref{eq:ls},
$ \sum_{t=1}^T \nabla_{\theta} \ell (  {\theta}_T, x_t, a_t,y_t ) +   {\theta}_T=0$
and hence  $F(0)=0$. 
As $\ell$ is twice differentiable in $\theta$, by the mean value theorem, there exists $\tau \in (0,1)$ such that
 \begin{equation}
 \label{eq:F1_F'}
         F(1) = F(0) + F'(\tau) = F'(\tau).
\end{equation}

Define $S_T =  \sum_{t=1}^T \nabla_{\theta} \ell  (  \theta^\star, x_t, a_t,y_t ))$ 
as in Lemma \ref{lemma: self normalising}. 
Then 
\begin{align}
\label{eq:F1}
\begin{split}
         |F(1)|^2 &= ({\theta^\star} - {\theta_{T}})^\top (S_T+\theta^\star) (S_T+\theta^\star)^\top({\theta^\star} - {\theta_{T}}) 
         \\
         & = \tr\left(({\theta^\star} - {\theta_{T}})({\theta^\star} - {\theta_{T}})^\top (S_T+\theta^\star) (S_T+\theta^\star)^\top\right) \\
    &=     \tr \left( V_T^{1/2}({\theta^\star} - {\theta_{T}})({\theta^\star} - {\theta_{T}})^\top V_T^{1/2} V_T^{-1/2} 
(S_T+\theta^\star) (S_T+\theta^\star)^\top  V_T^{-1/2}\right) \\
 &\leq \tr\left( V_T^{1/2}({\theta^\star} - {\theta_{T}})({\theta^\star} - {\theta_{T}})^\top V_T^{1/2} \right) 
  \tr\left(V_T^{-1/2} (S_T+\theta^\star) (S_T+\theta^\star)^\top  V_T^{-1/2}\right) 
 \\
 & =\big\|V_T^{1/2} ({\theta^\star} - {\theta_{T}}) \big\|^2_{\sR^d} \big\|S_T+\theta^\star\big\|_{V_T^{-1}}^2,
 \end{split}
\end{align}
where the second to last inequality used  the fact that $\tr(AB) \leq \tr(A)\tr(B)$ for $A,B \in \sS^d_{\ge 0}$.

By the chain rule and (H.\ref{assum:loss}\ref{item: convex}),  
     \begin{align}
     \begin{split}
     \label{eq:F'}
         |F'(\tau)| &= ({\theta^\star} - {\theta_{T}})^\top 
 \left( \sum_{t=1}^T \nabla^2_{\theta} \ell \big(\theta^\tau, x_t,a_t,y_t \big) +   I_{d}  \right) ({\theta^\star} - {\theta_{T}}) \\
 &\geq  ({\theta^\star} - {\theta_{T}})^\top   (V_T - I_{d}) ({\theta^\star} - {\theta_{T}}) 
 \\
 &=    ({\theta^\star} - {\theta_{T}})^\top V_T^{ 1/2}  V_T^{-1/2}  (V_T - I_{d})V_T^{-1/2} V_T^{1/2}  ({\theta^\star} - {\theta_{T}}) 
 \\
 &\ge  \| V_T^{1/2} ({\theta^\star} - {\theta_{T}}) \|^2_{\sR^d} \rho_{\min} (V_T^{-1/2} (V_T - I_{d})  V_T^{-1/2}\big).
 \end{split}
     \end{align}
     Combining \eqref{eq:F1_F'}, \eqref{eq:F1} and \eqref{eq:F'} yields 
$$ \big\|V_T^{1/2} ({\theta^\star} - {\theta_{T}}) \big\|^2_{\sR^d}  \leq \rho_{\min} (V_T^{-1/2} (V_T - I_{d})  V_T^{-1/2}\big)^{-2}\big\|S_T+\theta^\star \big\|^2_{V_T^{-1}},$$
which implies that 
\begin{align}
\label{eq:theta_star_T}
 \big\| {\theta^\star} - {\theta_{T}} \big\|^2_{\sR^d}  \leq \rho_{\min }(V_T)^{-1}\rho_{\min} (V_T^{-1/2} (V_T - I_{d})  V_T^{-1/2}\big)^{-2}\big (\|S_T \big\|_{V_T^{-1}}+\|\theta^\star\|_{V_T^{-1}})^2.
 \end{align}

We claim that   $\rho_{\min} (V_T^{-1/2} (V_T - I_{d})  V_T^{-1/2}\big)\ge 1/2$. To see it, let $H_T=   \sum_{t=1}^T H(x_t, a_t)\in  {\sS}^d_{\ge 0}$. Then $V_T=H_T+2I_d$,
and 
\begin{align*}
 \rho_{\min} (V_T^{-1/2} (V_T - I_{d})  V_T^{-1/2}\big) 
 &=\min_{x\not =0}\frac{x^\top V_T^{-1/2} (V_T - I_{d})  V_T^{-1/2} x}{\|x\|^2_{\sR^d}}
 =\min_{y\not =0}\frac{y^\top (V_T - I_{d})  y}{\|V^{1/2}_T y\|^2_{\sR^d}}
 \\
 &=\min_{y\not =0}\frac{y^\top  H_T y+y^\top y }{y^\top  H_T y+2y^\top y }
 =1- \min_{y\not =0}\frac{ y^\top y }{y^\top  H_T y+2y^\top y }
 \ge \frac{1}{2},
\end{align*}
as $y^\top  H_T y \ge 0$. 
Hence by \eqref{eq:theta_star_T},
$$ \big\| {\theta^\star} - {\theta_{T}} \big\|^2_{\sR^d}  \leq 8\rho_{\min} (V_T)^{-1} (\big\|S_T \big\|_{V_T^{-1}}^2+ \big\|\theta^\star \big\|_{V_T^{-1}}^2),$$
which along with Lemma  \ref{lemma: self normalising} yields the desired   result. 
\end{proof}

\subsection{Proof of Theorem \ref{Thm: convergence_epsilon}}
\label{sec:proof_error_epsilon}

The following lemma establishes  
a   concentration inequality of  the information matrix $(H(x_t,a_t))_{t\in \sN}$,
which will be used in the proof of Theorem \ref{Thm: convergence_epsilon}.
The proof of the lemma is given in Section \ref{sec:proof_sub_exponential}.

\begin{Lemma}
\label{lemma:sub-exponential condition}
    Suppose  (H.\ref{assum:pi_theta}) and  (H.\ref{assum:explore}) hold.
    For all $n\in \sN\cup\{0\}$, let $\cF_n =\sigma\{(x_k,a_k,y_k)_{k=1}^n\}$ and 
    let  $H_n= H(x_n,a_n)$.
Then for all  $n\in \sN$ and $\delta \in (0,1]$,
$$\sP \left(\left\| \sum_{k=1}^n \Big( H_k - \sE[H_k |\cF_{k-1}] \Big)\right\|_{\op} \leq  \frac{2C_H}{3} \left(\ln \frac{d}{\delta} + \sqrt{ \left(\ln \frac{d}{\delta}\right)^2+18 n  \ln \frac{d}{\delta}} \right)\right) \geq 1-\delta.$$
 
\end{Lemma}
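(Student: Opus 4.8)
The plan is to view the partial sums $\sum_{k=1}^n\big(H_k-\sE[H_k\mid\cF_{k-1}]\big)$ as a self-adjoint matrix martingale and to invoke a matrix Bernstein/Freedman concentration inequality. Set $X_k\coloneqq H_k-\sE[H_k\mid\cF_{k-1}]$; each $X_k$ is symmetric, $\cF_k$-measurable, and satisfies $\sE[X_k\mid\cF_{k-1}]=0$, so $\big(\sum_{k=1}^n X_k\big)_n$ is a self-adjoint matrix martingale adapted to $(\cF_n)$. The inequality requires two inputs: an almost-sure bound on the operator norm of the increments and a bound on the predictable quadratic variation $W_n\coloneqq\sum_{k=1}^n\sE[X_k^2\mid\cF_{k-1}]$.

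First I would bound the range. By (H.\ref{assum:explore}) we have $0\preceq H(x,a)\preceq C_H I_d$ for all $(x,a)$, hence $0\preceq H_k\preceq C_H I_d$ and $0\preceq\sE[H_k\mid\cF_{k-1}]\preceq C_H I_d$; the triangle inequality then gives $\|X_k\|_{\op}\le 2C_H$ almost surely. Consequently $X_k^2\preceq 4C_H^2 I_d$, so $\sE[X_k^2\mid\cF_{k-1}]\preceq 4C_H^2 I_d$ and $W_n\preceq 4nC_H^2 I_d$ deterministically. Because this variance bound holds surely rather than only on a high-probability event, I can use $\sigma^2\coloneqq 4nC_H^2$ as a fixed variance proxy and avoid intersecting with an auxiliary event of the form $\{\rho_{\max}(W_n)\le\sigma^2\}$ that appears in the general statement.

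With range parameter $R=2C_H$ and variance proxy $\sigma^2=4nC_H^2$, the matrix Bernstein inequality for self-adjoint martingales gives $\sP\big(\|\sum_{k=1}^n X_k\|_{\op}\ge t\big)\le d\exp\big(-\tfrac{t^2/2}{\sigma^2+Rt/3}\big)$ for all $t\ge 0$. Setting the right-hand side equal to $\delta$ and writing $L\coloneqq\ln(d/\delta)$ reduces the problem to solving the quadratic $t^2-\tfrac{2LR}{3}t-2L\sigma^2=0$, whose positive root is $\tfrac{R}{3}\big(L+\sqrt{L^2+18L\sigma^2/R^2}\big)=\tfrac{2C_H}{3}\big(L+\sqrt{L^2+18Ln}\big)$, since $\sigma^2/R^2=n$. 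This is exactly the claimed threshold, and the event that $\|\sum_{k=1}^n X_k\|_{\op}$ stays below it therefore has probability at least $1-\delta$.

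The main obstacle is the noncommutativity of the increments: because the $H_k$ need not commute, one cannot reduce the estimate to a scalar Bernstein inequality applied eigenvalue-by-eigenvalue, and must instead rely on the operator-level Laplace-transform method (and Lieb's concavity theorem) underlying the matrix Bernstein/Freedman inequality. Once that tool is granted, the remaining work—establishing the almost-sure operator-norm bound on the increments and the deterministic variance bound $W_n\preceq 4nC_H^2 I_d$ from the uniformly bounded positive-semidefinite structure of $H$, and then inverting the Bernstein tail—is routine.
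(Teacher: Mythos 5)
Your proof is correct and takes essentially the same route as the paper's: both set $X_k = H_k - \sE[H_k\mid\cF_{k-1}]$, bound the increments by $2C_H$ and the predictable quadratic variation deterministically by $4nC_H^2$ (so the variance event in the matrix Freedman inequality of Tropp holds surely), and then invert the tail bound $d\exp\bigl(-\tfrac{t^2/2}{\sigma^2+Rt/3}\bigr)=\delta$ by solving the same quadratic, arriving at the stated threshold. No gaps.
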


\begin{proof}[Proof of Theorem \ref{Thm: convergence_epsilon}]
Fix $\delta \in (0,1]$.  
For each $T\in \sN$, let 
$V_T \coloneqq \sum_{t=1}^T H(x_t, a_t) +  2I_{d}$.
  By Theorem  \ref{thm:statistical error} (with $\delta' = \delta/2$) and Lemma \ref{lemma:sub-exponential condition} (with $\delta' = \delta/2$), 
  for all $T\in \sN$, 
with   probability at least $1-\delta$,     
\begin{equation}
        \label{eq: convergence phase1}
  \big\|{\theta_{T}} - {\theta^\star} \big\|^2_{\sR^d} \leq  
8 \rho_{\min} (V_T)^{-1} \left(\ln \big( {\det V_T} \big) + 2 \ln \left( \tfrac{2}{\delta}\right)+ \big\|\theta^\star \big\|_{V_T^{-1}}^2\right),
\end{equation}   
and 
\begin{equation}
        \label{eq: convergence phase2}
\left\| \sum_{t=1}^T \Big( H(x_t, a_t)- \sE[H(x_t, a_t)|\cF_{t-1}] \Big)\right\|_{\op} \leq  \frac{2C_H}{3} \left(\ln \frac{2d}{\delta} + \sqrt{ \left(\ln \frac{d}{\delta}\right)^2+18 T  \ln \frac{2d}{\delta}} \right).
\end{equation}   
 where $\cF_t =\sigma\{(x_k ,a_k ,y_k)_{k=1}^t\}$.
In the sequel, we carry out the estimate conditioned on the above event.
Note by  (H.\ref{assum:explore}) and the  exploration step  in Algorithm \ref{Alg: epsilon greedy}, 
 \begin{equation}
 \label{eq: convergence phase4}
     \rho_H \sum_{t=1}^T \epsilon_t   I_{d} \preceq  \sum_{t=1}^T   \sE[H(x_t, a_t)|\cF_{t-1}] \preceq T C_H I_{d}.
 \end{equation}
Combining \eqref{eq: convergence phase2} with the upper bound in \eqref{eq: convergence phase4} yields that 
$$\left\| V_T \right\|_{\op} \leq 2 + TC_H +  \frac{2C_H}{3} \left(\ln \frac{2d}{\delta} + \sqrt{ \left(\ln \frac{d}{\delta}\right)^2+18 T  \ln \frac{2d}{\delta}} \right).$$
This along with the fact that  $\det A \leq \|A\|_{\op}^{d}$ for all    $A \in \sS^{d}_{\ge 0}$ yields 
\begin{equation}
    \label{eq: convergence phase5}
\begin{aligned}
\begin{split}
    &\ln \big( {\det V_T} \big) 
    \\
    &\leq d \ln \left(  2 + TC_H +  \frac{2C_H}{3} \left(\ln \frac{2d}{\delta} + \sqrt{ \left(\ln \frac{d}{\delta}\right)^2+18 T  \ln \frac{2d}{\delta}} \right)\right)
    \\
    &\le  d \ln \left(  2 + C_H \left (  T + \frac{4}{3} \ln \frac{2d}{\delta} +2\sqrt{2T\ln \frac{2d}{\delta}}\right)\right)
    \le    d \ln \left(  2 + C_H \left (  2 T + 6 \ln \frac{2d}{\delta} \right)\right).
    \end{split}
\end{aligned}
\end{equation}
Similarly, combining \eqref{eq: convergence phase2} with the lower bound in \eqref{eq: convergence phase4} yields
\begin{equation}
    \label{eq: convergence phase6}
\begin{aligned}
\begin{split}
  \rho_{\min} (V_T) &\geq 2 +  \rho_H \sum_{t=1}^T \epsilon_t - \frac{2C_H}{3} \left(\ln \frac{2d}{\delta} + \sqrt{ \left(\ln \frac{d}{\delta}\right)^2+18 T  \ln \frac{2d}{\delta}} \right)
  \\
  &\ge 2 +  \rho_H \sum_{t=1}^T \epsilon_t -  {2C_H} \left( \ln \frac{2d}{\delta} + \sqrt{ 2 T  \ln \frac{2d}{\delta}} \right).
  \end{split}
\end{aligned}
\end{equation}
Combining 
\eqref{eq: convergence phase1}, 
\eqref{eq: convergence phase5} and \eqref{eq: convergence phase6}  implies that for all   $\delta\in (0,1)$ and  $T\in \sN$, 
if 
\begin{equation}
\label{eq:condition_T}
\tilde{M}(\epsilon,T,\delta') = 2 +  \rho_H \sum_{t=1}^T \epsilon_t -  {2C_H} \left( \ln \frac{2d}{\delta'} + \sqrt{ 2 T  \ln \frac{2d}{\delta'}} \right)>0,
\end{equation}
then 
with probability at least $1-\delta$, 
\begin{equation}
\label{eq:estimate_T}
  \big\|{\theta_{T}} - {\theta^\star} \big\|^2_{\sR^d} \leq  
 \frac{  8 d \ln \left(  2 + C_H \left (  2 T + 6 \ln \frac{2d}{\delta'} \right)\right) + 16 \ln \left( \tfrac{2}{\delta'}\right)+ 2 \big\|\theta^\star \big\|_{\sR^d}^2}{\tilde M(\epsilon,T,\delta') }.
\end{equation}   
Now suppose that 
\begin{equation}
M(\epsilon,T,\delta) = 2 +  \rho_H \sum_{t=1}^T \epsilon_t -  {2C_H} \left( \ln \frac{2dT^2}{\delta} + \sqrt{ 2 T  \ln \frac{2dT^2}{\delta}} \right)>0, \quad \forall  T\in \sN, 
\end{equation}
then applying \eqref{eq:condition_T} with $\delta'=\delta/T^2$, 
it holds with probability at least $1-\delta\sum_{t=1}^\infty \frac{1}{T^2}=1-\frac{\pi^2}{3}\delta$ that for all $T\in \sN$,  
\begin{equation}
  \big\|{\theta_{T}} - {\theta^\star} \big\|^2_{\sR^d} \leq  
 \frac{  8 d \ln \left(  2 + C_H \left (  2 T + 6 \ln \frac{2dT^2}{\delta} \right)\right) + 16 \ln \left( \tfrac{2T^2}{\delta}\right)+ 2 \big\|\theta^\star \big\|_{\sR^d}^2}{M(\epsilon,T,\delta) }.
\end{equation} 
This proves the desired estimate. 
\end{proof}

\subsection{Proof of Theorem \ref{Thm: expected_regret}}
\label{sec:proof_expected_regret}

We start by 
quantifying  how errors in gradient evaluation propagate through the policy gradient iterates.
The proof of Lemma \ref{lemma:regret to action} is given in Section \ref{sec:proof_regret_action}.

\begin{Lemma}
\label{lemma:regret to action}
Suppose   (H.\ref{assum:reward}) holds.
Consider Algorithm \ref{Alg: epsilon greedy}   with   learning rates     $\eta_t\equiv \eta\in (0, 1/L_a]$ for all $t\in \sN$.
Then for all $T \in \sN $,
\begin{align*}
   & \sup_{a\in \cA }\bar r_{\theta^\star}(x,a )- \bar r_{\theta^\star}(x,\phi_{T}(x))
   \\
    & \leq 2  
   \left( \sup_{(x,a)\in\cX\times  \cA } |\bar r_{\theta^\star}(x,a )|\right)  \left(1-\gamma_a \eta \right)^{T} + \frac{\eta L_\theta^2}{2} \sum_{t= 1}^{T} (1-\gamma_a \eta)^{T-t} \|
  \theta_t-\theta^\star  \|^2_{\sR^d}.
\end{align*}
\end{Lemma}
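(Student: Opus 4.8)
The plan is to track the sub-optimality gap $\Delta_t(x) \coloneqq \sup_{a\in\cA}\bar r_{\theta^\star}(x,a) - \bar r_{\theta^\star}(x,\phi_t(x))$ along the policy gradient iterates and establish a one-step recursion of the form $\Delta_t(x) \le (1-\gamma_a\eta)\Delta_{t-1}(x) + (\text{error term involving }\|\theta_t-\theta^\star\|^2)$, then unroll it. The key point is that the update $\phi_t = \phi_{t-1} + \eta(\nabla_a\bar r_{\theta_t})(\cdot,\phi_{t-1}(\cdot))$ uses the \emph{inexact} gradient $\nabla_a\bar r_{\theta_t}$ rather than $\nabla_a\bar r_{\theta^\star}$, so I will split this into an exact gradient ascent step plus a perturbation. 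First I would fix $x\in\cX$ and write $a_{t} \coloneqq \phi_t(x)$, $g^\star_t \coloneqq (\nabla_a\bar r_{\theta^\star})(x,a_{t-1})$, $g_t \coloneqq (\nabla_a\bar r_{\theta_t})(x,a_{t-1})$, so $a_t = a_{t-1} + \eta g_t$ and $\|g_t - g^\star_t\|_{\cA} \le L_\theta\|\theta_t-\theta^\star\|_{\sR^d}$ by \eqref{eq:stability}.

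Next I would apply the $L_a$-smoothness (descent lemma) from (H.\ref{assum:reward}) to $a\mapsto\bar r_{\theta^\star}(x,a)$ along the step from $a_{t-1}$ to $a_t$:
\begin{equation*}
\bar r_{\theta^\star}(x,a_t) \ge \bar r_{\theta^\star}(x,a_{t-1}) + \eta\langle g^\star_t, g_t\rangle_{\cA} - \tfrac{L_a\eta^2}{2}\|g_t\|^2_{\cA}.
\end{equation*}
Writing $g_t = g^\star_t + (g_t - g^\star_t)$ and using $\eta \le 1/L_a$, the inner product and the quadratic term combine so that, after standard manipulations (completing the square, Young's inequality on the cross terms), one gets a bound of the shape
\begin{equation*}
\bar r_{\theta^\star}(x,a_t) \ge \bar r_{\theta^\star}(x,a_{t-1}) + \tfrac{\eta}{2}\|g^\star_t\|^2_{\cA} - \tfrac{\eta}{2}\|g_t - g^\star_t\|^2_{\cA}.
\end{equation*}
Then I invoke the Polyak--\L ojasiewicz inequality \eqref{eq:pl}, which gives $\tfrac12\|g^\star_t\|^2_{\cA} \ge \gamma_a\Delta_{t-1}(x)$; subtracting $\sup_a\bar r_{\theta^\star}(x,a)$ from both sides converts the displayed inequality into $\Delta_t(x) \le (1-\gamma_a\eta)\Delta_{t-1}(x) + \tfrac{\eta}{2}\|g_t - g^\star_t\|^2_{\cA} \le (1-\gamma_a\eta)\Delta_{t-1}(x) + \tfrac{\eta L_\theta^2}{2}\|\theta_t-\theta^\star\|^2_{\sR^d}$. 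Note $1-\gamma_a\eta \ge 0$ since $\gamma_a\le L_a\le 1/\eta$. Unrolling this recursion from $t=T$ down to $t=0$ yields $\Delta_T(x) \le (1-\gamma_a\eta)^T\Delta_0(x) + \tfrac{\eta L_\theta^2}{2}\sum_{t=1}^T(1-\gamma_a\eta)^{T-t}\|\theta_t-\theta^\star\|^2_{\sR^d}$, and bounding $\Delta_0(x) \le 2\sup_{(x,a)}|\bar r_{\theta^\star}(x,a)|$ gives the claim.

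The main obstacle I anticipate is handling the cross term $\eta\langle g^\star_t, g_t - g^\star_t\rangle_{\cA}$ correctly: a naive Young's inequality with the wrong weights would either spoil the contraction factor $(1-\gamma_a\eta)$ or produce an error constant worse than $\eta L_\theta^2/2$. The clean way is to note that with $\eta\le 1/L_a$ the descent lemma already gives $\bar r_{\theta^\star}(x,a_t)\ge \bar r_{\theta^\star}(x,a_{t-1}) + \tfrac{\eta}{2}\|g_t\|^2_{\cA} - \tfrac{\eta}{2}\|g_t - g^\star_t\|^2_{\cA} + \tfrac{\eta}{2}\|g^\star_t\|_\cA^2 - \eta\|g_t-g_t^\star\|_\cA^2$-type bookkeeping; more precisely, using $\langle g^\star_t,g_t\rangle = \tfrac12\|g^\star_t\|^2+\tfrac12\|g_t\|^2-\tfrac12\|g_t-g^\star_t\|^2$ in the descent lemma and $\eta L_a\le 1$ makes the $\|g_t\|^2$ terms cancel favourably, leaving exactly $+\tfrac{\eta}{2}\|g^\star_t\|^2 - \tfrac{\eta}{2}\|g_t-g^\star_t\|^2$ with no leftover cross term. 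The rest is the routine PL-inequality substitution and the geometric-sum unrolling, neither of which requires the Hilbert-space structure beyond the inner-product identity already used.
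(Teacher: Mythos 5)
Your proposal is correct and follows essentially the same route as the paper's proof: the descent lemma with the inexact gradient, the polarization identity $\langle g^\star_t,g_t\rangle=\tfrac12\|g^\star_t\|^2+\tfrac12\|g_t\|^2-\tfrac12\|g_t-g^\star_t\|^2$ combined with $\eta L_a\le 1$ to absorb the $\|g_t\|^2$ terms, the PL inequality to obtain the one-step contraction, Condition \eqref{eq:stability} to bound the error term, and the discrete Gronwall unrolling. No gaps.
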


\begin{proof}[Proof of Theorem \ref{Thm: expected_regret}]
Throughout this proof, let 
  $C\ge 0$ be an absolute  constant, which  
 may take a different value at each occurrence.
Note that   if
\begin{equation}
  M(\epsilon,T)  \coloneqq  2 +  \rho_H \sum_{t=1}^T \epsilon_t -  {2C_H} \left( \ln \frac{2\pi^2 dT^3}{3} + \sqrt{ 2 T  \ln \frac{2\pi^2 dT^3}{3}} \right)>0, 
\quad   \forall T\in \sN,
\end{equation}
 then
 for any given $T\in \sN$, 
  by Theorem \ref{Thm: convergence_epsilon} with $\delta=3/(\pi^2T)\in (0, 3/\pi^2]$,  
 there exists an   event $A_T$ such that   
 $\sP(A_T)\ge 1-1/T$ and 
\begin{equation*}
 \big\|{\theta_{T}} - {\theta^\star} \big\|^2_{\sR^d} \leq  
 \frac{  8 d \ln \left(  2 + C_H \left (  2 T + 6 \ln \frac{2\pi^2 dT^3}{3} \right)\right) + 16 \ln \left( \tfrac{2\pi^2 T^3}{3}\right)+ 2 \big\|\theta^\star \big\|_{\sR^d}^2}{M(\epsilon,T) },
 \quad \forall T\in \sN.
\end{equation*} 
Suppose that   for all $T\in \sN$,  $\sum_{t=1}^T \epsilon_t\ge C_1 \sqrt{T} \ln (2\pi^2 dT^3 /3) $ with 
$C_1\ge 5  {C_H}/{\rho_H}$, which 
  can be ensured by setting
 $\sum_{t=1}^T \epsilon_t\ge 30 {C_H}/{\rho_H} \sqrt{T} \ln (dT) $ for all $T\ge 1$.
Then for all $T\ge 1$, 
$$
M(\epsilon, T)\ge 2+ C_H   \sqrt{T} \ln \left(\frac{2\pi^2 dT^3}{3}\right)\left(5-\frac{2}{\sqrt T}-\frac{2 \sqrt{2}}{\sqrt{ \ln (2\pi^2 dT^3 /3)}}\right)
\ge C C_H   \sqrt{T} \ln ( 2 dT),
$$ 
and hence on the event $A_T$,
\begin{align}
\label{eq:theta_T_error_bound1}
\begin{split}
  \big\|{\theta_{T}} - {\theta^\star} \big\|^2_{\sR^d}
  & \leq  
C  \frac{    d \ln \left(  2 + C_H \left (  2 T + 6 \ln  {2d T}  \right)\right) +   \ln \left(  {2 T} \right)+   \big\|\theta^\star \big\|_{\sR^d}^2}{ C_H   \sqrt{T} \ln ( 2dT) }
\\
&
\le C 
\frac{ d+\ln C_H+ \big\|\theta^\star \big\|_{\sR^d}^2}{C_H \sqrt{T}}, \quad \forall T\in \sN.
\end{split}
\end{align} 
Now let $\Delta_t =\sup_{a\in \cA} \bar r_{\theta^\star} (x_t, a) - \bar r_{\theta^\star}(  x_t, a_t ) $ for all $t\in \sN$,
and let $C_r = \sup_{(x,a)\in \cX\times \cA} |\bar r_{\theta^\star} (x,a)|<\infty$ (see     (H.\ref{assum:reward})). 
By the definitions of the regret   \eqref{eq:regret}  and Algorithm    \ref{Alg: epsilon greedy}, 
for all $T\ge 1$,
\begin{align}
\label{eq:regret_expect_term1}
\begin{split}
  & \sE\left[ \textrm{Reg}\left((a_t)_{t=1}^T\right)\right] =\sum_{t=1}^T \sE[\Delta_t  ] 
  \\
& \le   \sum_{t=1}^T \sE[\Delta_t 1_{\xi_t<\epsilon_t }] + \sum_{t=1}^T \sE[\Delta_t 1_{\xi_t\ge \epsilon_t }1_{A^c_t}  ]+\sum_{t=1}^T \sE[\Delta_t 1_{\xi_t\ge \epsilon_t }1_{A_t}  ]
\\
&
   \le 
   \sum_{t=1}^T 2 C_r \sP(\xi_t<\epsilon_t)  +  \sum_{t=1}^T 2 C_r \sP(A^c_t) +  \sum_{t=1}^T \sE[\Delta_t 1_{\xi_t\ge \epsilon_t }  1_{A_t}  ]
   \\
 &\le  2 C_r  \sum_{t=1}^T   \epsilon_t   +2 C_r   \sum_{t=1}^T  \frac{1}{t}  +\sum_{t=1}^T \sE[\Delta_t  1_{A_t} 1_{\xi_t\ge \epsilon_t } ].
 \end{split}
 \end{align}
 For all $t\ge 1$,
on the event  $A_{t}$ and $\{\xi_t\ge \epsilon_t \}$,  
by Lemma \ref{lemma:regret to action} and \eqref{eq:theta_T_error_bound1}, 
\begin{align*}
\Delta_t   1_{A_{t-1}} 1_{\xi_t\ge \epsilon_t }
& =\sup_{a\in \cA} \bar r_{\theta^\star} (x_t, a) - \bar r_{\theta^\star}(  x_t, \phi_{t-1}(x_t) )1_{A_{t}}  
\\
  & \leq 
 \left(1-\gamma_a \eta \right)^{t-1} \left(
2C_r+ \frac{\eta L_\theta^2}{2} \sum_{s=1}^{t-1} (1-\gamma_a \eta)^{-s} \|
  \theta_{s}-\theta^\star  \|^2_{\sR^d}\right)1_{A_{t}}  
  \\
  & \leq 
 \left(1-\gamma_a \eta \right)^{t-1} \left(
2C_r+ \frac{\eta L_\theta^2}{2} \sum_{s=1}^{t-1} (1-\gamma_a \eta)^{-s} C\frac{\bar{C}}{\sqrt{s}} \right),
\end{align*}
where $\bar {C} =  { (d+\ln C_H+ \big\|\theta^\star \big\|_{\sR^d}^2)}/{C_H }$. 
This along with \eqref{eq:regret_expect_term1} shows that for all $T\ge 2$, 
\begin{align}
\label{eq:regret_expect_term2}
\begin{split}
  & \sE\left[ \textrm{Reg}\left((a_t)_{t=1}^T\right)\right]  
\\
&\le 
C C_r c_2\left(\sqrt{T} \ln (d T)+\ln T+ \frac{1}{\gamma_a \eta}  \right) 
+C \eta L_\theta^2 \bar{C} \sum_{t=1}^T \sum_{s=1}^{t-1} (1-\gamma_a \eta)^{t-1-s}s^{-1/2}
\\
&= 
C C_r c_2 \left(\sqrt{T} \ln (d T)+ \ln T+ \frac{1}{\gamma_a \eta}  \right) 
+C \eta L_\theta^2 \bar{C} \sum_{s=1}^{T-1} \sum_{t=s+1}^{T} (1-\gamma_a \eta)^{t-1-s}s^{-1/2}
\\
&\le 
C C_r c_2  \left(\sqrt{T} \ln (d T)+\ln T+  \frac{1}{\gamma_a \eta}  \right) 
+C \frac{ L_\theta^2 \bar{C}}{\gamma_a} \sum_{s=1}^{T-1}  s^{-1/2}
\\
&
\le Cd \max\left( \frac{C_r c_2}{\gamma_a \eta} ,  \frac{ L_\theta^2  (1+\ln C_H+ \big\|\theta^\star \big\|_{\sR^d}^2)}{C_H \gamma_a}    \right)\sqrt{T} \ln (d T).
 \end{split}
 \end{align}
This proves the desired regret bound.
\end{proof}

 \section{Proofs of technical results}
 \subsection{Proofs of   Example \ref{example:negative log loss} and  Proposition \ref{prop:glm_explore}}
\label{sec:log-likelihood}

\begin{proof}[Proof of  Example \ref{example:negative log loss}]
A straightforward computation using the definition of $\ell$ and $H$ shows that 
$\nabla^2_\theta \ell(\theta, x,a, y )= \frac{2c_1}{c_2}  \psi(x,a)^\top \nabla^2_w \tilde b(x,\psi(x,a))  \psi(x,a) \succeq  H(x,a) $.
This verifies  (H.\ref{assum:loss}\ref{item: convex}).

To prove  (H.\ref{assum:loss}\ref{item: compatibility}), recall that 
since for all   $( x,a ) \in    \cX \times \cA$,
$\psi(x,a)\theta^\star \in \cH$ and $\tilde{b}(x,w)=b(x,w)$ for all $w\in \cH$.
Hence for all $(x,a,y)\in \cX\times \cA\times \cY$,  
\begin{align*}
   \nabla_\theta  \ell (\theta^\star, x,a,y ) &= \frac{2c_1}{c_2} \left(- \psi (x,a)^\top h(x,y)   +  \psi(x,a)^\top \nabla_ w \tilde b (x, \psi(x,a) {\theta}^\star )\right) 
   \\
   &= \frac{2c_1}{c_2} \left(- \psi (x,a)^\top h(x,y)   +  \psi(x,a)^\top \nabla_ w   b (x, \psi(x,a) {\theta}^\star )\right). 
  \end{align*} 
    Hence  
 for all $ {\lambda} \in \sR^{ d}$, 
\begin{align*}
& \int_\cY \exp \left( \lambda^\top \nabla_\theta  \ell (\theta^\star, x,a, y ) \right) \pi_{\theta^\star } (\d y | x,a)
\\
&= \int_\cY g(x,y) \exp \left( 
  h(x,y)^\top \psi (x,a)(\theta^\star -\lambda \frac{2c_1}{c_2})
 \right) \overline{\nu}   (\d y)
 \\
 &\quad \times \exp\left(- b\big(x,  \psi(x,a) {\theta}^\star \big)
     +\frac{2c_1}{c_2}   \nabla_ w b \big(x, \psi(x,a) {\theta}^\star \big)^\top  \psi(x,a) \lambda \right)
 \\
 &=    \exp\left(b(x,\psi (x,a)(\theta^\star -\lambda \frac{2c_1}{c_2})) - b\big(x,  \psi(x,a) {\theta}^\star \big)
     +\frac{2c_1}{c_2}    \nabla_ w b \big(x, \psi(x,a) {\theta}^\star \big)^\top \psi(x,a)\lambda  \right),
\end{align*}
where the last inequality used the condition $  \int_{\cY}  g(x,y) \exp\left( h(x,y)^\top w\right) \overline{\nu} (\d y) = \exp\left(  b\big(x, w\big)  \right)   $ for all $w\in \sR^m$. 
Since $\nabla^2_w b(x,w) \preceq c_2 I_m$, by Taylor's expansion,  
\begin{align*}
& \int_\cY \exp \left( \lambda^\top \nabla_\theta  \ell (\theta^\star, x,a, y ) \right) \pi_{\theta^\star } (\d y | x,a)
\\
 &\le   \exp\left( 
     \frac{c_1^2}{c_2^2} \sup_{w\in \sR^m}  \lambda^\top \psi(x,a)^\top  \nabla^2_ w b \big(x, w \big)^\top \psi(x,a)\lambda  \right)
     \le \exp\left( 
       \lambda^\top   H (x,a) \lambda  \right).
\end{align*}
This finishes the proof.
\end{proof}

\begin{proof}[Proof of Proposition \ref{prop:glm_explore}]
By  the boundedness of $\psi$ an the definition of $H$ in  Example \ref{example:negative log loss},
it is clear that $  \int_{\cX \times \cA} \exp(r_H \|H(x,a)\|_{\sR^{d\times d}}) \eta(\d a ) \mu( d x) \leq 2$
for some $r_H>0$.
For all $v\in \sR^d\setminus \{0\}$,
\begin{align}
\label{eq:H_min}
v^\top \left( \int_{\cX \times \cA} H(x,a) \eta  (\d a) \mu (d x)\right) v = 
  \int_{\cX \times \cA} \|\psi(x,a) v\|_{\sR^d}^2 \eta  (\d a) \mu (d x) 
>0.
\end{align}
Suppose that the above inequality does not hold. Then 
$\psi(x,a)^\top v =0$ for $\eta\otimes \nu$-a.s.~$(x,a)\in \cX\times \cA$. 
By the continuity of $\psi$ and the definition of the support $\operatorname{supp}(\mu)$, 
$\psi(x,a) v =0$
for all $(x,a)\in   \operatorname{supp}(\mu) \times \cA $.
This along with 
  $\operatorname{span}\{\psi(x,a )^\top\mid x\in \operatorname{supp}(\mu), a\in \cA \}=\sR^d$
  implies that $v=0$ and hence leads to a contraction. 
  Recall that for $A\in \sS^d$,
  $\rho_{\min}(A)=\min_{v\in \sR^d, \|v\|_{\sR^d}=1}v^\top Av$.
The desired conclusion then follows from  the compactness of $\{v\in \sR^d\mid \|v\|_{\sR^d}=1\}$ 
   and \eqref{eq:H_min}.
   \end{proof}

\subsection{Proof of Example \ref{ex:ls}}
\label{sec:ls}

\begin{proof}[Proof of Example \ref{ex:ls}]
Note that 
$\ell =C_1\tilde \ell$ with 
$\tilde \ell   (\theta, x,a,y ) =\frac{1}{2}(y-\mu(\theta, x,a))^2$ being 
the unnormalized quadratic loss.
For all $(\theta, x,a ,y)\in \sR^d\times \cX\times \cA\in \cY$,
\begin{align}
    \nabla_\theta \tilde \ell   (\theta, x,a,y ) &=(y-\mu(\theta, x,a)) (\nabla_\theta \mu) (\theta, x,a),
    \label{eq:gradient_ls}
    \\
    \nabla^2_\theta \tilde \ell   (\theta, x,a,y ) &=(y-\mu(\theta, x,a)) (\nabla^2_\theta \mu) (\theta, x,a)
    +(\nabla_\theta \mu) (\theta, x,a)(\nabla_\theta \mu) (\theta, x,a)^\top.
        \label{eq:hessian_ls}
\end{align}

To verify (H.\ref{assum:loss}\ref{item: convex}),
by \eqref{eq:hessian_ls} and the condition of $\sH$,
\begin{align*}
     \nabla^2_\theta \tilde \ell   (\theta, x,a,y ) &\succeq -| y-\mu(\theta, x,a)| (\nabla^2_\theta \mu) (\theta, x,a)
    +(\nabla_\theta \mu) (\theta, x,a)(\nabla_\theta \mu) (\theta, x,a)^\top
    \\
    &
    \succeq 
  -| y-\mu(\theta, x,a)|
    \sH(x,a)+
    \frac{1}{c_1} \sH(x,a)
    \succeq  \left(\frac{1}{c_1}+\underline{y}  - \overline{y}\right)\sH(x,a).
\end{align*}
Multiplying both sides of the inequality by $C_1\ge 0$ yields
$$
\nabla^2_\theta   \ell   (\theta, x,a,y )  \succeq C_1 \left(\frac{1}{c_1}+\underline{y}  - \overline{y}\right)\sH(x,a)=H(x,a).
$$

To verify  (H.\ref{assum:loss}\ref{item: compatibility}),
let $\lambda\in \sR^d$ be fixed. 
Note that 
as 
$\cY\subset [\underline{y},\overline{y}] $,
by Hoeffding's lemma,
$$  \int_\cY \exp \left( \tilde{\lambda} \big(y - \mu(\theta^\star, x, a ) \big) \right) \pi_{{\theta^\star}} (\d y | x,a) \leq \exp \Big( \frac{ (\overline{y}- \underline{y})^2}{8}\tilde{\lambda}^2  \Big), \quad \forall   \tilde{\lambda}\in \sR.$$
Setting $\tilde{\lambda} =C_1\lambda^\top  (\nabla_\theta \mu) (\theta^\star, x,a)$ in the above inequality 
and use the condition of $\sH$
yield 
\begin{align*}
   & \int_\cY \exp \left( {\lambda}^\top   \nabla_\theta  \ell (\theta^\star, x,a, y )  \right) \pi_{{\theta^\star}} (\d y | x,a) 
   \\
   &\quad 
   \leq \exp \left( \frac{ (\overline{y}- \underline{y})^2}{8}C_1^2 \lambda^\top  
    (\nabla_\theta \mu) (\theta^\star, x,a) 
    (\nabla_\theta \mu) (\theta^\star, x,a)^\top 
    \lambda    \right) 
    \\
    &\quad 
   \leq \exp \left( \frac{ (\overline{y}- \underline{y})^2}{8}C_1^2 \lambda^\top  
   c_2\sH(x,a) 
    \lambda    \right) 
    =   \exp \left(   \lambda^\top  
     H(x,a) 
    \lambda    \right), 
\end{align*}
where the last identity used 
$\frac{ (\overline{y}- \underline{y})^2}{8}C_1^2 c_2=C_1 \left(\frac{1}{c_1}+\underline{y}  - \overline{y}\right)$
due to the definition of $C_1$.
\end{proof}

\subsection{Proof of Lemma \ref{lemma:sub-exponential condition}}
\label{sec:proof_sub_exponential}

 Lemma \ref{lemma:sub-exponential condition} follows from  the following   concentration inequality
for  matrix-valued  bounded martingale, 
which has been established    in \cite[Theorem 1.2]{tropp2011freedman}.
 
\begin{Lemma}
\label{lemma:concentration_bounded_martingale}
Let $(\Omega, \cF, (\cF_n)_{n\in \sN\cup\{0\}}, \sP)$ be a filtered probability space,
let $(Y_n)_{n\in \sN\cup\{0\}}$ be an  $\sS^d_{\ge 0}$-valued martingale with respect to the filtration 
$(\cF_n)_{n\in \sN\cup\{0\}}$,
let $(X_n)_{n\in \sN}$ be the difference sequence such that $X_n =Y_n-Y_{n-1}$.  
Assume that there exists $R\ge 0$ such that 
$\|X_n\|_{\op}\le R$ for all $n\in \sN$. 
Define the  predictable quadratic variation process 
$
W_n =\sum_{k=1}^n \sE[X_kX^\top_k\mid \cF_{k-1}]
$ for all $n\in \sN$. 
Then for all $t\ge 0$ and $\sigma^2>0$,
$$
\sP\left(\exists n\in \sN\, \big\vert\, \|Y_n\|_{\op}\ge t, \|W_n\|_{\op}\le \sigma^2 \right)
\le d \exp\left(-\frac{t^2/2}{\sigma^2+Rt/3}\right).
$$
\end{Lemma}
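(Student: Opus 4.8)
This is precisely the matrix Freedman inequality of \cite[Theorem 1.2]{tropp2011freedman}, so strictly speaking the statement may be invoked by citation; here I outline the self-contained argument via the matrix Laplace-transform method. Since $(Y_n)$ is $\sS^d_{\ge 0}$-valued, $\|Y_n\|_{\op}=\rho_{\max}(Y_n)$, and it suffices to control the largest eigenvalue. First I would fix $\theta>0$ and combine the exponential Markov inequality with the elementary bound $e^{\rho_{\max}(A)}\le \tr\exp(A)$ for $A\in\sS^d$ to obtain, for each fixed $n$,
\[
\sP\big(\rho_{\max}(Y_n)\ge t\big)\le e^{-\theta t}\,\sE\big[\tr\exp(\theta Y_n)\big].
\]
The crux is then to control $\sE[\tr\exp(\theta Y_n)]$ in terms of the predictable variation $W_n$.

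The key step is a supermartingale construction. Set $g(\theta)\coloneqq (e^{\theta R}-\theta R-1)/R^2$. Since $\sE[X_k\mid\cF_{k-1}]=0$ and $\|X_k\|_{\op}\le R$, the scalar inequality $e^{\theta x}\le 1+\theta x+g(\theta)x^2$ (valid for $|x|\le R$, $\theta\ge 0$) transfers to Hermitian matrices to give $e^{\theta X_k}\preceq I+\theta X_k+g(\theta)X_k^2$, whence, conditioning and using $I+A\preceq e^A$,
\[
\sE\big[e^{\theta X_k}\mid\cF_{k-1}\big]\preceq \exp\big(g(\theta)\,\sE[X_k^2\mid\cF_{k-1}]\big).
\]
Because matrices do not commute, one cannot split $\exp(\theta Y_k)$ multiplicatively; instead I would invoke Lieb's concavity theorem (the map $A\mapsto\tr\exp(L+\log A)$ is concave on $\sS^d_+$ for fixed $L\in\sS^d$) together with conditional Jensen to push the expectation through the trace exponential. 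This shows that $G_n\coloneqq \tr\exp\big(\theta Y_n-g(\theta)W_n\big)$ satisfies $\sE[G_n\mid\cF_{n-1}]\le G_{n-1}$, i.e.\ $(G_n)$ is a nonnegative supermartingale with $\sE[G_n]\le G_0=d$.

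Finally, to handle the uniform-in-$n$ event under the constraint $\|W_n\|_{\op}\le\sigma^2$, I would introduce the stopping time $\tau\coloneqq\inf\{n:\rho_{\max}(Y_n)\ge t\}$ and work on $\{\tau<\infty,\ \|W_\tau\|_{\op}\le\sigma^2\}$. On this event $\rho_{\max}(\theta Y_\tau-g(\theta)W_\tau)\ge\theta t-g(\theta)\sigma^2$, so $G_\tau\ge\exp(\theta t-g(\theta)\sigma^2)$; applying the optional-stopping/maximal bound for the supermartingale $(G_n)$ yields
\[
\sP\big(\exists n:\rho_{\max}(Y_n)\ge t,\ \|W_n\|_{\op}\le\sigma^2\big)\le d\,\exp\big(-\theta t+g(\theta)\sigma^2\big).
\]
Optimizing the exponent over $\theta>0$, using the standard estimate that leads to the Bernstein form, produces the claimed bound $d\exp\!\big(-\tfrac{t^2/2}{\sigma^2+Rt/3}\big)$. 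The main obstacle is the supermartingale construction: the noncommutativity of the $Y_n$ rules out the naive scalar argument, and Lieb's concavity theorem is exactly the device needed to interchange the conditional expectation with the trace exponential while preserving the inequality; coupling this with the stopping-time reduction to respect the $\|W_n\|_{\op}\le\sigma^2$ constraint is the delicate part.
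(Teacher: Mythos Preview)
The paper does not prove this lemma at all: it is stated as a direct citation of \cite[Theorem~1.2]{tropp2011freedman} and then used as a black box to derive Lemma~\ref{lemma:sub-exponential condition}. Your proposal correctly identifies this, and your additional self-contained sketch faithfully reproduces Tropp's matrix Laplace-transform argument (the conditional matrix mgf bound via the scalar inequality $e^{\theta x}\le 1+\theta x+g(\theta)x^2$, Lieb's concavity to obtain the supermartingale $G_n=\tr\exp(\theta Y_n-g(\theta)W_n)$, and the stopping-time reduction), so there is nothing to correct.
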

 
 \begin{proof}[Proof of Lemma \ref{lemma:sub-exponential condition}]
For each $n\in \sN$, let 
$X_n = H_n - \sE[H_n|\cF_{n-1}]$ and define $Y_n =\sum_{k=1}^n X_k$. 
Note that $\|X_n\|_{\op}\le \|H_n\|_{\op} + \sE[\|H_n\|_{\op}|\cF_{n-1}] \le 2C_H$ due to (H.\ref{assum:explore}).
Fix  $n\in \sN$. By Lemma \ref{lemma:concentration_bounded_martingale} (with $R=2C_H$),
  for all $t\ge 0$ and $\sigma^2>0$,
\begin{equation}
\label{eq:Y_W_H}
\sP\left(  \|Y_n\|_{\op}\ge t, \|W_n\|_{\op}\le \sigma^2 \right)
\le d \exp\left(-\frac{t^2/2}{\sigma^2+2C_H t/3}\right),
\end{equation}
where $
W_n =\sum_{k=1}^n \sE[X_kX^\top_k\mid \cF_{k-1}]
$. Observe that for $\sP$-a.s.,
$$
\|W_n\|_\op \le \sum_{k=1}^n  \sE[\|X_kX^\top_k\|_\op\mid \cF_{k-1}] 
=  \sum_{k=1}^n  \sE[\|X_k \|^2_\op\mid \cF_{k-1}] \le n 4C_H^2,
$$
and hence by setting $\sigma^2=n4C_H^2$ in \eqref{eq:Y_W_H},  for all $t\ge 0$, 
$$
\sP\left(  \|Y_n\|_{\op}\ge t  \right)
=
\sP\left(  \|Y_n\|_{\op}\ge t, \|W_n\|_{\op}\le 4 n C_H^2 \right)
\le d \exp\left(-\frac{t^2/2}{4n C_H^2+2C_H t/3}\right).
$$
Hence for all $\delta\in (0,1)$, by choosing 
$$t= \frac{2C_H}{3} \ln \frac{d}{\delta} +\sqrt{\left(\ln \frac{d}{\delta}\frac{2C_H}{3}\right)^2+8 nC_H^2 \ln \frac{d}{\delta}} 
= \frac{2C_H}{3} \left(\ln \frac{d}{\delta} + \sqrt{ \left(\ln \frac{d}{\delta}\right)^2+18 n  \ln \frac{d}{\delta}} \right)
\ge 0,
$$ 
it holds that  $d \exp\left(-\frac{t^2/2}{4C_H^2+2C_H t/3}\right)= \delta$, 
which completes the desired estimate for $\delta\in (0,1)$. The conclusion for $\delta =1$ holds trivially. 
\end{proof}

\subsection{Proof of Lemma \ref{lemma:regret to action}}
\label{sec:proof_regret_action}

\begin{proof}[Proof of Lemma \ref{lemma:regret to action}]
Fix $x\in \cX$.
Observe that for all $t\in \sN $, by \eqref{eq:pl},  
\begin{equation}
\label{eq:pl_t}
2\gamma_a \left(\sup_{a\in \cA} \bar r_{\theta^\star} (x, a) -  \bar r_{\theta^\star} ( x, \phi_{t}(x) ) \right) \leq   \|(\nabla_a \bar r_{\theta^\star})( x, \phi_{t}(x) ) \|^2_{\cA}. 
 \end{equation}
For all $t\in \sN$, By the $L_a$-Lipschitz continuity of $a\mapsto \bar r_{\theta^\star}(x,a)$ and 
$ \phi_{t}=\phi_{t-1}+\eta (\nabla_a \bar r_{\theta_{t}})(\cdot,\phi_{t-1}(\cdot))$
with $\eta\le 1/L_a$, 
\begin{align}
\label{eq:lipschitz_smooth}
\begin{split}
   & \bar r_{\theta^\star}(  x, \phi_{t}(x)) 
   \\
    &\ge     \bar r_{\theta^\star}( x, \phi_{t-1}(x)) +
    \langle
    ( \nabla_a     \bar r_{\theta^\star})( x,\phi_{t-1}(x)), \phi_{t}(x)-\phi_{t-1}(x) \rangle_\cA - \frac{L_a}{2} \|\phi_{t}(x)-\phi_{t-1}(x)\|^2_{\cA}
     \\
    &=\bar r_{\theta^\star}( x, \phi_{t-1}(x)) +
    \langle
    ( \nabla_a     \bar r_{\theta^\star})( x,\phi_{t-1}(x)), \eta (\nabla_a \bar r_{\theta_{t}})(x,\phi_{t-1}(x))\rangle_\cA - \frac{L_a\eta^2}{2} \|  (\nabla_a \bar r_{\theta_{t}})(x,\phi_{t-1}(x))\|^2_{\cA}
    \\
    &\ge \bar r_{\theta^\star}( x, \phi_{t-1}(x)) +
    \langle
    ( \nabla_a     \bar r_{\theta^\star})( x,\phi_{t-1}(x)), \eta (\nabla_a \bar r_{\theta_{t}})(x,\phi_{t-1}(x))\rangle_\cA - \frac{\eta}{2} \|  (\nabla_a \bar r_{\theta_{t}})(x,\phi_{t-1}(x))\|^2_{\cA}
    \\
    &=
    \bar r_{\theta^\star}( x, \phi_{t-1}(x)) +\eta 
    \langle
    ( \nabla_a     \bar r_{\theta^\star})( x,\phi_{t-1}(x)),   (\nabla_a \bar r_{\theta_{t}})(x,\phi_{t-1}(x))\rangle_\cA - \frac{\eta}{2} \|  (\nabla_a \bar r_{\theta_{t}})(x,\phi_{t-1}(x))\|^2_{\cA}
    \\
    &=  \bar r_{\theta^\star}( x, \phi_{t-1}(x))  + \frac{\eta}{2} \|       ( \nabla_a     \bar r_{\theta^\star})( x,\phi_{t-1}(x))  \|^2_\cA - \frac{\eta}{2} \|
  \cE_t (x)  \|^2_\cA,
 \end{split}
\end{align}
with   $\cE_t (x)\coloneqq ( \nabla_a     \bar r_{\theta^\star})( x,\phi_{t-1}(x)) - (\nabla_a \bar r_{\theta_{t}})(x,\phi_{t-1}(x)) $.
Combining 
\eqref{eq:pl_t} and \eqref{eq:lipschitz_smooth} yields that for all $t\in \sN$, 
\begin{align}
\label{eq:recursive}
\begin{split}
&\sup_{a\in \cA} \bar r_{\theta^\star} (x, a) - \bar r_{\theta^\star}(  x, \phi_{t}(x)) 
\\
&\le 
\sup_{a\in \cA} \bar r_{\theta^\star} (x, a) - \bar r_{\theta^\star}(  x, \phi_{t-1}(x)) 
- \frac{\eta}{2} \|       ( \nabla_a     \bar r_{\theta^\star})( x,\phi_{t-1}(x))  \|^2_\cA + \frac{\eta}{2} \|
  \cE_t (x)  \|^2_\cA
\\
&\le 
(1-\gamma_a \eta) \left(\sup_{a\in \cA} \bar r_{\theta^\star} (x, a) - \bar r_{\theta^\star}(  x, \phi_{t-1}(x)) \right)
+ \frac{\eta}{2} L_\theta^2\|
  \theta_t-\theta^\star  \|^2_{\sR^d},
\end{split}
\end{align}
where the last inequality used Condition \eqref{eq:stability}.
The desired inequality follows from the standard discrete   Gronwall lemma;  
 see e.g., \cite[Proposition 3.1]{emmrich1999discrete}.
 \end{proof}

\bibliographystyle{siam}
\bibliography{rl.bib}

\section*{Statements \& Declarations}

\subsection*{Funding}

Authors acknowledge the support of the UKRI Prosperity Partnership Scheme (FAIR) under EPSRC Grant EP/V056883/1 and the Alan Turing Institute. 

\subsection*{Competing Interests}

The authors have no relevant financial or non-financial interests to disclose.

\subsection*{Author Contributions}

All authors have contributed equally to this study.

\end{document}